\def\BibTeX{{\rm B\kern-.05em{\sc i\kern-.025em b}\kern-.08em
    T\kern-.1667em\lower.7ex\hbox{E}\kern-.125emX}}
\DeclarePairedDelimiter{\abs}{\lvert}{\rvert}
\algnewcommand\algorithmicparfor{\textbf{for}}
\algnewcommand\algorithmicpardo{\textbf{pardo}}
\algnewcommand\algorithmicendparfor{}
\algnewcommand{\LineComment}[1]{\State \(\textcolor{blue}{\triangleright}\) #1}
\def\Let@{\def\\{\notag\math@cr}}
\newtheorem{theorem}{Theorem}
\newcommand{\V}{\mathcal{V}}
\newcommand{\Vl}{\mathcal{V}^{(\ell)}}
\newcommand{\G}{\mathcal{G}}
\newcommand{\Gsub}{\mathcal{G}_{\text{sub}}}
\newcommand{\E}{\mathcal{E}}
\newcommand{\Vls}{\mathcal{V}_{\text{LS}}}
\newcommand{\Vgs}{\mathcal{V}_{\text{GS}}}
\newcommand{\Als}{\bm{A}_{\text{LS}}}
\newcommand{\Ags}{\bm{A}_{\text{GS}}}
\newcommand{\Hls}{\bm{H}_{\text{LS}}}
\newcommand{\Hgs}{\bm{H}_{\text{GS}}}
\newcommand{\Wself}{\bm{W}_{\text{self}}}
\newcommand{\Wneigh}{\bm{W}_{\text{neigh}}}
\newcommand{\Hh}{\bm{H}}
\newcommand{\Els}{\mathcal{E}_{\text{LS}}}
\newcommand{\Label}{\bm{L}}
\newcommand{\dls}{d_{\text{LS}}}
\newcommand{\fl}{f^{(\ell)}}
\newcommand{\fll}{f^{(\ell+1)}}
\newcommand{\Vsub}{\V_{\text{sub}}}
\newcommand{\Esub}{\E_{\text{sub}}}
\newcommand{\paren}[1]{\left( #1 \right)}
\newcommand{\size}[1]{\left\lvert #1 \right\rvert}
\begin{document}

% Graph partitioning for load balancing: fiduccia mattheyses algorithm

%\title{Deep Graph Analytics: \\Accurate, Efficient and Scalable Graph Embedding\\
\title{Accurate, Efficient and Scalable Graph Embedding\\
%\title{Graph Analytics Meets Deep Learning: \\Accurate, Efficient and Scalable Graph Embedding\\
% GraphSAINT: Scalable, Accuracy Preserving Inductive Learning on Graphs
%{\footnotesize
%\thanks{Acknowledgement to HIVE}
%}
}
\author[1]{Hanqing Zeng\IEEEauthorrefmark{1}\thanks{\IEEEauthorrefmark{1}Equal contribution}}
\author[1]{Hongkuan Zhou\IEEEauthorrefmark{1}}
\author[1]{Ajitesh Srivastava}
\author[2]{Rajgopal Kannan}
\author[1]{Viktor Prasanna}
\affil[1]{Univerisity of Southern California\\
\{zengh,hongkuaz,ajiteshs,prasanna\}@usc.edu}
\affil[2]{US Army Research Lab-West\\
rajgopal.kannan.civ@mail.mil}
\maketitle

\begin{abstract}
%Graph embedding captures high dimensional, unstructured graph data into low dimensional, structured vector representations to facilitate applications such as classification and clustering. 
%Recent advances in deep learning has enabled high quality graph embedding using Graph Convolutional Networks (GCNs). 

%%%%%%%%%%%%%%%%%%%     NEW
The Graph Convolutional Network (GCN) model and its variants are powerful graph embedding tools for facilitating classification and clustering on graphs. However, a major challenge is to reduce the complexity of layered GCNs and make them parallelizable and scalable on very large graphs --- state-of the art techniques are unable to achieve scalability without losing accuracy and efficiency. In this paper, we propose novel parallelization techniques for graph sampling-based GCNs that achieve superior scalable performance on very large graphs without compromising accuracy. Specifically, our GCN guarantees work-efficient training and produces order of magnitude savings in computation and communication. To scale GCN training on tightly-coupled shared memory systems, we develop parallelization strategies for the key steps in training:  For the graph sampling step, we exploit parallelism within and across multiple sampling instances, and devise an efficient data structure for concurrent accesses that provides theoretical guarantee of near-linear speedup with number of processing units. For the feature propagation step within the sampled graph, we improve cache utilization and reduce DRAM communication by data partitioning. We prove that our partitioning strategy is a 2-approximation for minimizing the communication time compared to the optimal strategy. We demonstrate that our parallel graph embedding outperforms state-of-the-art methods in scalability (with respect to number of processors, graph size and GCN model size), efficiency and accuracy on several large datasets. On a 40-core Xeon platform, our parallel training achieves $64\times$ speedup (with AVX) in the sampling step and $25\times$ speedup in the feature propagation step, compared to the serial implementation, resulting in a net speedup of $21\times$. Our scalable algorithm enables deeper GCN, as demonstrated by $1306\times$ speedup on a 3-layer GCN compared to Tensorflow implementation of state-of-the-art.
\end{abstract}

\begin{IEEEkeywords}
Graph Embedding; Graph Convolutional Networks; Subgraph Sampling; 
\end{IEEEkeywords}
\setlength{\textfloatsep}{10pt plus 1.0pt minus 2.0pt}

% NSF 3D memory
\section{Introduction}

% subgraph sampling vs. level sampling (GCN/GraphSAGE/FastGCN)
% sampling: accuracy/work-efficient in large graphs
% similarity and difference between CNN and GCN

Graph embedding is a powerful dimensionality reduction technique. Taking an unstructured, attributed graph as input, the embedding process outputs structured vectors which capture information of the original graph. 
%The goal of graph embedding is to perform dimensionality reduction by representing unstructured, attributed graphs as structured vectors. 
Embedding facilitates data mining on graphs, and thus is essential in a wide range of tasks such as content recommendation and protein function prediction. Among the various embedding techniques, Graph Convolutional Network (GCN) \cite{gcn} and its variants \cite{graphsage}, \cite{fastgcn} have attained much attention recently. GCN based approaches produce accurate and robust results without the need of manual feature selection.

%Challenges exist to parallelize GCN models on large scale graphs. 

In order to scale GCN to large graphs, the typical approach is to decompose training into ``mini-batches" and attempt to parallelize mini-batch training by sampling on GCN layers (layer sampling). The batched GCN\cite{gcn} and its successor GraphSAGE\cite{graphsage} sample the inter-layer edge connections. Their approaches preserve the training accuracy of the original GCN model, but their parallelization strategy is not work-efficient. The amount of redundant computation increases by a factor of node degree for every additional layer of GCN. To alleviate such high redundancy due to ``neighbor explosion'' in deeper layers, FastGCN \cite{fastgcn} proposes to sample the nodes of GCN layers instead of the edge connections. Although this approach is empirically faster than \cite{gcn}, \cite{graphsage}, it does not guarantee work-efficiency, incurs accuracy loss and requires expensive preprocessing which affects scaling.%[][][] try to parallelize the GCN training by sampling 

Due to the layer sampling design philosophy, it is difficult for state-of-the-art methods \cite{gcn,graphsage,fastgcn} to simultaneously achieve accuracy, efficiency and scalability. In this work, we propose a new graph sampling-based GCN which achieves superior performance on a variety of large graphs. Our novelty lies in designing scalable GCN model based on \emph{parallelized graph sampling} (rather than layer sampling), {\it without compromising} accuracy or efficiency. We achieve scalability and performance by 1) Developing a novel data structure that enables efficient parallel subgraph sampling through fast parallel updates to degree distributions; 2) Developing an optimized parallel implementation of intra-subgraph feature propagation and weight updates --- specifically a cache-efficient partitioning scheme that reduces DRAM communication, resulting in optimized memory performance. 
%3) Devising a template for exploiting {\it inter-subgraph} parallelism for various graph sampling techniques at very low cost. 
We achieve work-efficiency by avoiding neighbor explosion, as each layer of our complete GCN consists of only small set of nodes. 
%For each iteration of weight updates, we sample a small subgraph out of the large training graph and construct a complete GCN from the subgraph. Since the neural network in our case is small yet complete, we can avoid neighbor explosion. 
Finally, since our sampled subgraphs preserve connectivity characteristics of the original training graph, we demonstrate that accuracy also remains unaffected.
%To scale GCNs to large graphs, sampling is a necessary step. However, neighbor explosion is inevitable as long as the sampling belongs to the category of level-sampling. 
%Problem with state-of-the-art GCN based methods is that they develop training algorithms from the view point of a CNN on structured input data, without leveraging the properties of the underlying graph structure. 
The main contributions of this paper are:

\begin{outline}
\1 We propose parallel training algorithm for a novel graph sampling-based GCN model, where:
    \2 \emph{Accuracy} is achieved due to connectivity-preserving graph sampling techniques.
    \2 \emph{Efficiency} is optimal since ``neighbor explosion" in layer sampling based GCNs is avoided.
    \2 \emph{Scalability} is achieved with respect to number of processing cores, graph size and GCN depth by extracting parallelism at various key steps.
\1 We propose a novel data structure that accelerates degree distribution based sampling through fast incremental parallel updates. Our method has a near-linear scalability guarantee with respect to number of processing units. % evaluate their impact on embedding accuracy and speed. 
\1 We parallelize the key training step of subgraph feature propagation. Using intelligent partitioning on the feature dimension, we optimize DRAM and cache performance and scale training to large number of cores.
\1 We perform thorough evaluation on a 40-core Xeon server. Compared with serial implementation, we achieve $21\times$ overall training time speedup. Compared with other GCN based methods, our parallel (serial) implementation achieves up to $37.4\times$ ($7.8\times$) training time reduction when reaching the same accuracy.
\1 We demonstrate that our implementation can enable deeper GCN. We obtain $1306\times$ speedup for 3-layer GCN compared to Tensorflow implementation of state-of-the-art on a 40-core machine.
%\1 We propose a subgraph-sampling based graph convolutional network model, which performs highly accurate graph embedding in a \emph{scalable} and \emph{work efficient} manner.
%	\2 We propose several subgraph-sampling techniques, and evaluate their impact on the quality and speed of graph embedding.
%    \2 Our sampled-subgraph based method leads to accuracy improvement and significant training time reduction compared with state-of-the-art embedding algorithms. 
%\1 For the proposed subgraph-sampling techniques, we parallelize them in a scalable and cache-efficient way. 
%\1 We optimize both the training and inference computation, by enhancing cache efficiency and load balancing using graph partitioning. 
\end{outline}
\section{Background and Related Work}

%\subsection{Graph Embedding Techniques}

%[incomplete]

%\begin{itemize}
%    \item Summarize types of embedding: deeping learning (GCN or DeepWalk); graph kernel; matrix factorization
%    \item Advantage of GCN based solutions.
%\end{itemize}

%[incomplete](Very high level]) Some intuition of graph convolution. Make the connection with CNNs.

\label{sec: gcn background}

Graph Convolutional Network (GCN) along with its variations is the state-of-the-art embedding method, widely shown to yield highly accurate and robust results. A GCN is a kind of multi-layer neural network, and it performs vertex embedding as follows. 
An input graph with vertex attributes (or features) is fed into the GCN. The GCN propagates the features layer by layer, where each layer performs feature extraction based on the learned weight matrices and the input graph topology. The last GCN layer outputs embedding vectors for the vertices of the input graph. 
Essentially, both the input graph attributes and topology are ``embedded" into the output vectors. 
%the embedding vectors capture the information of both the input graph attributes and topology. 
%Thus, the GCN needs to be constructed such that the feature propagation along layers reflects the information diffusion pattern within the input graph. In the following, we formally define the GCN structure and operations. 

\begin{figure*}
\centering
\includegraphics[width=0.93\textwidth]{./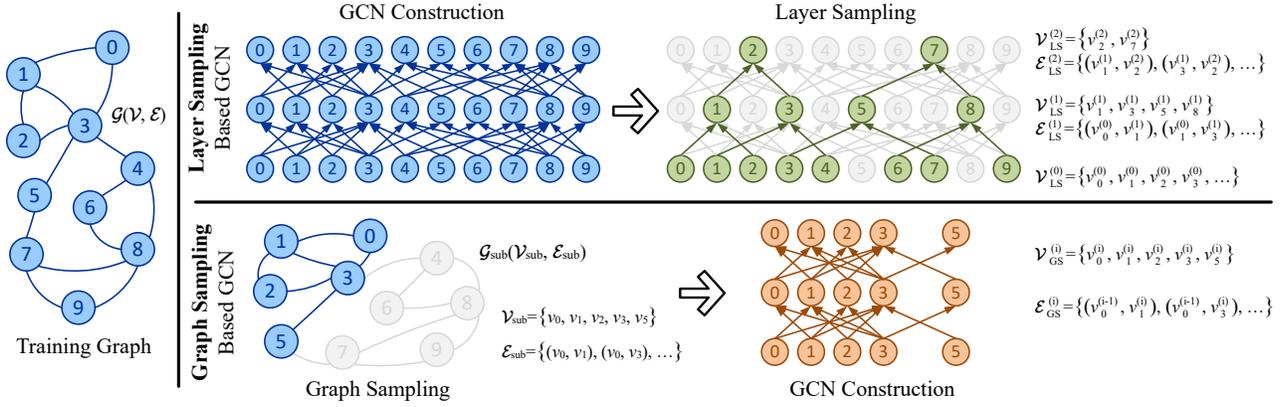}
\caption{Illustration on layer sampling and graph sampling based GCN design.}
\label{fig: illustration}
\end{figure*}

Figure \ref{fig: illustration} shows the network structure of GCNs. We use superscript ``$(\ell)$" to denote GCN layer-$\ell$ parameters. An $L$-layer GCN upon the input graph $\G(\V,\E)$ is constructed as follows. Each layer consists of $\abs{\V}$ nodes ---  layer node $v_i^{(\ell)}\in \Vl$ corresponds to the graph vertex $v_i\in \V$. 
Edges in GCN connect nodes in adjacent layers based on the input graph edges. Specifically, the collection of edges between layers $\ell-1$ and $\ell$ are defined by $\E^{(\ell)}\in\Set{\paren{v_i^{(\ell-1)},v_j^{(\ell)}}|\paren{v_i,v_j}\in\E}$. 
Note that $\V^{(\ell-1)}$, $\V^{(\ell)}$ and $\E^{(\ell)}$ form a bipartite graph represented by bi-adjacency matrix $\bm{A}^{(\ell)}$. 
Each GCN node $v_i^{(\ell)}$ is associated with a feature vector $\bm{h}_i^{(\ell)}\in\mathbb{R}^{\fl}$ of length $\fl$, and let $\Hh^{(\ell)}\in\mathbb{R}^{\abs{\V}\times \fl}$ be the feature matrix. 
%So $\Hh^{(\ell)}[v_i^{(\ell)}]=\big(h_i^{(\ell)}\big)^T$, where the $[*]$ operator extracts the row of matrix $\Hh^{(\ell)}$ and superscript $T$ denotes transpose of matrix or vector. 
Note that $\Hh^{(0)}$ consists of vertex attributes of input graph $\G$.

The propagation of $\bm{h}_i^{(\ell)}$ from layer $\ell-1$ to $\ell$ is processed by two weight matrices: $\Wself^{(\ell)}$ and $\Wneigh^{(\ell)}$. Both weight matrices are learned during training. 
%Figure XX illustrates the structure of a complete GCN. Suppose we construct an $L$-layer GCN from the vertex-attributed graph $\G(\V,\E,\Hh^{(0)})$. Here $\V=\Set{v_i|0\leq i\leq \abs{\V}-1}$, $\E\in\Set{(v_i,v_k)|0\leq i,j \leq \abs{\V}-1\land i\neq j}$ represents the vertices and edges of $\G$, and $\Hh^{(0)}\in\mathbb{R}^{\abs{\V}\times f^{(0)}}$ is a matrix representing the length-$f^{(0)}$ attributes of vertices. In the complete GCN, since each layer contains $\abs{\V}$ nodes, we denote $v_i^{(\ell)}$ in layer $\ell$ as the corresponding copy of $v_i\in\V$. Let $\V^{(\ell)}=\Set{v_i^{(\ell)}|0\leq i\leq \abs{\V}-1}$ represent the GCN layer-$\ell$ nodes ($0\leq \ell\leq L$). Also, we denote $\bm{h}_i^{(\ell)}\in\mathbb{R}^{f^{(\ell)}}$ as the length-$f^{(\ell)}$ feature vector associated with $v_i^{(\ell)}$. Let $\Hh^{(\ell)}\in\mathbb{R}^{\abs{\V}\times f^{(\ell)}}$ be the matrix representation of the collection of $\bm{h}_i^{(\ell)}$. Note that $\Hh^{(0)}$ is the input attributes of $\G$. In addition, let $\E^{(\ell)}\in\Set{\big(v_i^{(\ell)},v_j^{(\ell+1)}\big)|(v_i,v_j)\in\E}$ be the edge connections between layers $\ell$ and $\ell+1$ ($0\leq \ell\leq L-1$). Alternatively, we use $\bm{A}^{(\ell)}$ as the bi-adjacency matrix of the bipartite graph $\G_{\text{bi}}^{(\ell)}=(\V^{(\ell)},\V^{(\ell+1)},\E^{(\ell)})$. Finally, each GCN layer contains a learnable weight matrice $\Wneigh^{(\ell)}$ ($0\leq \ell\leq L-1$). If self-features are also propagated (as proposed by GraphSAGE \cite{graphsage}), there is an additional matrix $\Wself^{(\ell)}$ ($0\leq \ell\leq L-1$).
The forward propagation of GCN from layer $\ell-1$ to $\ell$ involves two major steps:

\begin{enumerate}
    \item \emph{Feature aggregation}: Each $v_i^{(\ell)}$  collects features of its layer $\ell-1$ neighbors, $\Set{\bm{h}_j^{(\ell-1)}|\paren{v_j^{(\ell-1)},v_i^{(\ell)}}\in\E^{(\ell)}}$, and calculates the mean of neighbor features.
    \item \emph{Weight application}: The mean-aggregated neighbor features are multiplied by $\Wneigh^{(\ell)}$. The self-features $\bm{h}_i^{(\ell-1)}$ from layer $\ell-1$ are also multiplied by the $\Wself^{(\ell)}$. 
\end{enumerate}

The features after applying weights are sent to layer $\ell$. With an optional neighbor-self feature concatenation and non-linear activation, we obtain the next layer features $\Hh^{(\ell)}$. 

To train GCNs on large scale graphs, it is essential to do mini-batch training, where number of vertices involved in each weight update is much smaller than the training graph size. Batched GCN \cite{gcn}, GraphSAGE \cite{graphsage} and FastGCN \cite{fastgcn} incorporate various GCN \emph{layer sampling} techniques to construct mini-batches. Upper part of Figure \ref{fig: illustration} abstracts the edge based layer sampling technique \cite{graphsage} and node based layer sampling technique \cite{fastgcn}, where subscript LS dentes \textbf{L}ayer \textbf{S}ampling. %In the following, we use subscript $\text{LS}$ to denote parameters related with ``Layer Sampling" techniques. 
%Use subscript LS to denote a variable after \textbf{L}ayer \textbf{S}ampling. Starting from some root nodes $\Vls^{(L)}$ (randomly sampled from $\V^{(L)}$), \cite{gcn} and \cite{graphsage} perform sampling on $\E^{(\ell)}$, layer by layer from $\ell=L-1$ down to $0$. 
Since layer sampling requires each node in layer $\ell$ to select multiple neighbor nodes in layer $\ell-1$, the deeper the layer sampler goes, the more nodes will be sampled (i.e., $\size{\Vls^{(\ell)}}$ becomes larger when $\ell$ gets smaller). We refer to this phenomenon as ``neighbor explosion''. % and analyze its inefficiency in Section \ref{sec: gcn efficiency}.
On the other hand, \cite{fastgcn} proposes a node based layer sampler. It samples in a two phased fashion. The first phase samples nodes of all the $L$ layers based on a pre-calculated probability distribution, where the samplers of each layer are independent. The second phase re-constructs the inter-layer edges to connect the sampled nodes. Empirically, this method mitigates the ``neighbor explosion'' problem  at the cost of accuracy loss and potentially expensive pre-processing (calculating probability distribution). Deeper layers may still require larger sampling population to avoid overly sparse inter-layer connection.

\section{Graph Sampling-Based GCN Model}

\iffalse
\begin{table}
\caption{Notations used throughout this paper\label{tb: notation}}
%\hspace{-1ex} %
%\resizebox{.48\textwidth}{!}{
\centering
\begin{tabular}
%{>{\centering}p{0.07\columnwidth}|>{\centering}p{0.10\columnwidth}>{\centering}p{0.10\columnwidth}>{\centering}p{0.10\columnwidth}>{\centering}p{0.10\columnwidth}}
{c|l}
\toprule
%\multicolumn{1}{p{0.7cm}|}{\centering Layer \\ $i$} & 
%\multicolumn{1}{p{1.5cm}}{\centering Image size \\ $\limgi$} & 
%\multicolumn{1}{p{1.5cm}}{\centering Kernel size \\ $\lkerni$} & 
%\multicolumn{1}{p{1.7cm}}{\centering Input channels \\ $\fini$} &
%\multicolumn{1}{p{1.9cm}}{\centering Output channels \\ $\fouti$}\\
Symbol & Meaning\\
\toprule
%$L$ & Number of GCN layers (except the input layer)\\
$\G(\V,\E)$ & The original graph $\G$ with $\V$ vertices and $\E$ edges\\
$\Wself^{(\ell)}$ & Weight matrix to be applied with self features\\
$\Wneigh^{(\ell)}$ & Weight matrix to be applied with neighbor features\\
\midrule
$\Vls^{(\ell)}$ & \\
$\Als^{(\ell)}$ & \\
$\Hls^{(\ell)}$ & Feature matrix for layer $\ell$ nodes\\
\midrule
$\Vgs^{(\ell)}$ & \\
$\Ags^{(\ell)}$ & \\
$\Hgs^{(\ell)}$ & Feature matrix for layer $\ell$ nodes\\
\bottomrule
\end{tabular}%}
\end{table}
\fi

%Although layer sampling techniques have become popular to enable mini-batched GCN training, state-of-the-art GCN models based on such design philosophy does not simultaneously satisfy the requirements of accuracy, efficiency and scalability. 
We present a novel graph sampling-based GCN model. Our parallel mini-batch training outperforms state-of-the-art GCN in accuracy, efficiency and scalability, simultaneously. We present the design of the graph sampling-based GCN (Section \ref{sec: gcn design}), and analyze the advantages in efficiency (Section \ref{sec: gcn efficiency}) and accuracy (Section \ref{sec: gcn accuracy}). We then present optimizations to scale training on parallel machines (Sections \ref{sec: para sample}, \ref{sec: para spmm}).

\subsection{Design of the Model}
\label{sec: gcn design}

As illustrated by the lower part of Figure \ref{fig: illustration}, the graph sampling-based approach does not construct a GCN directly on the original input graph $\G$. Instead, for each iteration of weight update during training, we first sample a small induced subgraph $\Gsub(\Vsub,\Esub)$ from $\G(\V,\E)$. We then construct a \emph{complete} GCN on $\Gsub$. The forward and backward propagation are both on this small yet complete GCN. Algorithm \ref{algo: gsaint training} presents the details of our approach, where subscript GS denotes \textbf{G}raph \textbf{S}ampling. The key distinction from traditional GCNs is that the computations (lines 5-13) are performed on nodes of the sampled graph instead of the sampled layer nodes, thus requiring much less computation in training (Section \ref{sec: gcn efficiency}). We also discuss the requirement for the $\text{SAMPLE}_{\text{G}}$ function (line 3) in Section \ref{sec: gcn efficiency}, and present a representative sampling algorithm that leads to high accuracy.

\begin{algorithm}
\caption{Training algorithm for mini-batched GCN using graph sampling techniques}
\label{algo: gsaint training}
\begin{algorithmic}[1]
\renewcommand{\algorithmicrequire}{\textbf{Input:}}
\renewcommand{\algorithmicensure}{\textbf{Output:}}
\Require Training graph $\G(\V,\E,\Hh^{(0)})$; Labels $\Label$
\Ensure Trained weights $\Set{\Wself^{(\ell)},\Wneigh^{(\ell)}|1\leq\ell\leq L}$
%\State Construct $L$-layer GCN to get $\Set{\V^{(\ell)}}$, $\Set{\E^{(\ell)}}$
\While{not terminate}\color{blue}\Comment{Iterate over mini-batches}\color{black}
    \State $\G_{\text{sub}}(\V_\text{sub},\E_\text{sub})\gets \text{SAMPLE}_{\text{G}}\big(\G(\V,\E)\big)$
    \State $\Set{\Vgs^{(\ell)}}, \Set{\E_{\text{GS}}^{(\ell)}}\gets $ GCN construction on $\G_{\text{sub}}$
    \State $\Set{\Ags^{(\ell)}}\gets $ bi-adj matrix of $\Set{\Big(\Vgs^{(\ell-1)},\Vgs^{(\ell)}, \E_{\text{GS}}^{(\ell)}\Big)}$
    \State $\Hgs^{(0)}\gets \Hh^{(0)}\left[ \Vgs^{(0)}\right]$
    \For{$\ell=1$ to $L$}\color{blue}\Comment{Forward propagation}\color{black}
        \State $\Hh_{\text{neigh}}\gets \Big(\Ags^{(\ell)}\Big)^{T}\cdot\Hgs^{(\ell-1)}\cdot\Wneigh^{(\ell)}$
        \State $\Hh_{\text{self}}\gets \Hgs^{(\ell-1)}\cdot \Wself^{(\ell)}$
        \State $\Hgs^{(\ell)}\gets \sigma\Big(\Hh_{\text{neigh}}|\Hh_{\text{self}}\Big)$\color{blue}\Comment{Concat \& activation}\color{black}
    \EndFor
    \LineComment{\color{blue}Label prediction by embedding; SGD weight update\color{black}}
    \State $\bm{X}\gets \text{PREDICT}\big(\Hgs^{(L)}\big)$
    \State $C\gets \text{LOSS}\paren{\bm{X},\bm{L}\left[ \Vgs^{(L)} \right]}$
    \State $\text{ADAM}\Big(C,\bm{X},\Set{\Hgs^{(\ell)}},\Set{\Wneigh^{(\ell)}}, \Set{\Wself^{(\ell)}}\Big)$
\EndWhile
%\STATE $\Vls^{(L)}\gets $ Sampled nodes in $\V^{(L)}$ (uniformly at random)\\
\State \Return $\Set{\Wself^{(\ell)}},\Set{\Wneigh^{(\ell)}}$
\end{algorithmic} 
\end{algorithm}

\subsection{Complexity of Graph Sampling-Based GCN}
\label{sec: gcn efficiency}

We analyze the computation complexity of our graph-sampling based GCN and show that it achieves work-optimality. In the  following analysis, we do not consider the sampling overhead, and we only focus on the forward propagation, since backward propagation has identical computation characteristics as forward propagation. Later, we also experimentally demonstrate that our technique is significantly faster even with the sampling step included (see Section~\ref{sec: exp}).

The main operations to propagate a batch (consisting of vertices of the sampled graph) forward by one layer are:
\begin{itemize}
    \item Feature aggregation: Each feature vector from layer-$\ell$ propagates via edges in $\E_{\text{GS}}^{(\ell)}$ to be aggregated in the next layer, leading to $\mathcal{O}\paren{\size{\E_{\text{GS}}^{(\ell)}}\cdot f^{(\ell)}}$ operations.
 \item Weight application: Each vertex multiplies its feature with the weight, leading to 
$\mathcal{O}\paren{\size{\V_{\text{GS}}^{(\ell)}}\cdot f^{(\ell-1)} f^{(\ell)}}$ operations.
\end{itemize}

Complexity of $L$-layer forward propagation in one batch is:

\begin{align}
\label{eq: Dcmp batch}
\mathcal{O}\left(\sum\limits_{\ell=0}^{L-1}\Big(\size{\E_{\text{GS}}^{(\ell)}}\cdot \fl+\size{\V^{(\ell+1)}_{\text{GS}}}\cdot\fl\cdot \fll\Big)\right)\\
\end{align}

The GCN constructed on our sampled subgraph has the same set of nodes in all layers $\size{\Vgs^{(\ell)}}=\size{\Vsub}$. If the average degree of the subgraph is $d_\text{GS}$, then number of edges between each layer of GCN $\E_\text{GS}^{(\ell)} = d_\text{GS}|\Vsub|$. This results in the complexity of a batch (subgraph) of our algorithm to $\mathcal{O}\paren{L\cdot \size{\V_\text{sub}}\cdot f\cdot (f+d_{\text{GS}})}$. If we define an epoch as one full traversal of all training vertices (i.e., processing of $|\V|/\size{\V_\text{sub}}$ batches), complexity of an epoch is $\mathcal{O}\paren{L\cdot \size{\V}\cdot f\cdot (f+d_{\text{GS}})}$

\paragraph*{Comparison Against Other GCN Models}
In \cite{graphsage}, a certain number $d_\text{LS}$ of neighbors are selected from the next layer  for each vertex in the current layer. It can be shown that depending on the batch size the complexity falls between: 

\textit{Case 1 [Small batch size]:} $\mathcal{O}\paren{\dls^{L}
\cdot \size{\V}\cdot f\cdot(f+\dls)}$.

\textit{Case 2 [Large batch size]}
$\mathcal{O}\paren{L\cdot \size{\V}\cdot f\cdot(f+\dls)}$.

We observe that when the batch size is much smaller than the training graph size, layer sampling technique by \cite{graphsage} results in high training complexity (it grows exponentially with GCN depth, although, often small values of L are used). This is the ``neighbor explosion'' phenomenon, which is effectively due to 
redundant computations for small batches, making the mini-batch training of \cite{graphsage} very inefficient. On the other hand, when the batch size of \cite{graphsage} becomes comparable to the training graph size, the training complexity grows linearly with GCN depth and training graph size. However, this resolution of the ``neighbor explosion'' problem comes at the cost of slow convergence and low accuracy \cite{large_batch_size}. Thus, such training configuration is not applicable to large scale graphs.

Our graph-sampling based GCN technique leads to a parallel training algorithm whose complexity is linear in GCN depth and training graph size. The work-efficiency of our GCN model is guaranteed by design, since the GCN in this case is always complete. In addition, by choosing proper graph sampling algorithms, we can construct subgraphs whose sizes are small, and do not need to be grown with the training graph (as shown in Section \ref{sec: exp}). Thus, the graph sampling-based GCN model achieves work-optimality without any sacrifice in accuracy due to large batch sizes.

\subsection{Accuracy of Graph Sampling-Based GCN}
\label{sec: gcn accuracy}

%From intuition, the GCN model naturally works with sampled graphs. Any graph (even the full training graph $\G$) can be viewed as a sampled graph from a possibly infinitely large graph $\G'$ in the world. A GCN built upon $\G$ is an estimate on the ideal GCN built upon $\G'$. GCNs built upon sampled subgraphs $\Gsub$ are estimates on the GCN built upon $\G$. If the sampling algorithm constructs enough number of representative subgraphs $\Gsub$, our graph sampling-based GCN model can absorb all the information in $\G$, and generate embedding without accuracy loss. 
By design, layer-based sampling methods assume that a subset of neighbors of a given node is sufficient to learn its representation. We achieve the same goal by sampling the graph itself. If the sampling algorithm constructs enough number of representative subgraphs $\Gsub$, our graph sampling-based GCN model should absorb all the information in $\G$, and generate accurate embeddings.
More specifically,
as discussed in Section \ref{sec: gcn background}, the output vectors ``embed" the input graph topology as well as the vertex attributes. A good graph sampling algorithm, thus, should guarantee:

\begin{enumerate}
    \item Sampled subgraphs preserve the connectivity characteristics in the training graph. 
    \item Each vertex in the training graph has some non-negligible probability to be sampled;
\end{enumerate}

%There are well-established metrics to quantify the connectivity characteristics of a graph, including cluster coefficients, degree distribution, vertex label density and so on. We choose the well-known frontier sampling algorithm to sample $\Gsub$ for our GCN model. 

For the first requirement, while ``connectivity'' may have several definitions, subgraphs output by the well-known frontier sampling algorithm \cite{frontier} approximate the original graph with respect to multiple connectivity measures. In addition, the frontier sampling algorithm also satisfies the second requirement. 
At the beginning of the sampling phase, the sampler picks some initial root vertices uniformly at random from the original graph. These root vertices constitute a significant portion of the sampled graph vertices. Thus, over large enough number of sampling iterations, all input vertex attributes of the training vertices will be covered by the sampler. 

For the above reasons, we use frontier sampling algorithm as the sampler for our GCN model.  We also perform detailed accuracy evaluation on our GCN model using the frontier sampling algorithm. The results in Section \ref{sec: exp} prove empirically that our design does not incur any accuracy loss.

%From the high-level, our approach is sampling on the training graph, and GraphSAGE / FastGCN is sampling on the neural network. 

%\emph{Benefit on work-efficiency}: List the computation complexity and communication volume equation.

\section{Parallel Graph Sampling Algorithm}
\label{sec: para sample}

\subsection{Graph Sampling Algorithm}
\label{sec: para sample baseline}

The frontier sampling algorithm proceeds as follows. 
Throughout the sampling process, the sampler maintains a constant size frontier set $\text{FS}$ consisting of $m$ vertices in $\G$. In each iteration, the sampler randomly pops out a vertex $v$ in $\text{FS}$ using a degree based probability distribution, and replaces $v$ in $\text{FS}$ with a randomly selected neighbor of $v$. The popped out $v$ is added to the vertices $\V_{\text{sub}}$ of $\Gsub$. 
The sampler repeats the above update process on the frontier set $\text{FS}$, until the size of $\V_{\text{sub}}$ reaches the desired budget $n$. Algorithm \ref{algo: frontier sampling} shows the details. According to \cite{frontier}, a good empirical value of $m$ is $1000$.%Empirically, the authors in \cite{frontier} suggest to set $m=1000$. %For sake of accuracy, we set the budget to $b\in\lbrack 2000,8000 \rbrack$, depending on the training graph. 

\begin{algorithm}
\caption{Frontier sampling algorithm}
\label{algo: frontier sampling}
\begin{algorithmic}[1]
\renewcommand{\algorithmicrequire}{\textbf{Input:}}
\renewcommand{\algorithmicensure}{\textbf{Output:}}
\Require{Original graph $\G(\V,\E)$; Frontier size $m$; Budget for number of sampled vertices $n$}
\Ensure{Induced subgraph $\Gsub(\V_{\text{sub}},\E_{\text{sub}})$}
\State $\text{FS}\gets m$ vertices selected uniformly at random from $\V$
\State $\V_{\text{sub}}\gets \Set{v_i|v_i\in \text{FS}}$
\For{$i=0$ to $n-m-1$}
    \State Select $u\in \text{FS}$ with probability $\text{deg}(u)/\sum_{v\in \text{FS}}\text{deg}(v)$
    \State Select $u'$ from $\Set{w|(u,w)\in\E}$ uniformly at random
    \State $\text{FS}\gets \big(\text{FS}\setminus \Set{u}\big)\cup\Set{u'}$
    \State $\V_{\text{sub}}\gets \V_{\text{sub}}\cup\Set{u}$
\EndFor
\State $\Gsub\gets $ Subgraph of $\G$ induced by $\V_{\text{sub}}$ 
\State \Return $\Gsub(\V_{\text{sub}},\E_{\text{sub}})$ 
\end{algorithmic} 
\end{algorithm}

In our experiments sequential implementation, we notice that about half of the time  is spent in sampling phase. Therefore, there is a need to parallelize the sampling step. The challenges to parallelization are: 
\begin{enumerate*}
    \item While sampling from a discrete distribution is a well-researched problem, we are focused on fast parallel sampling of a {\it dynamic} degree distribution (changes with addition/deletion of new vertex to the frontier). Existing well-known methods for fast sampling such as aliasing (which can output a sample in $\mathcal{O}(1)$ time with linear processing) cannot be modified easily for this problem.
    It is non-trivial to select a vertex from evolving $\text{FS}$ with low complexity. %Constructing or updating the probability distribution requires $\mathcal{O}(m)$ work. 
    A straightforward implementation requires $\mathcal{O}(m\cdot n)$ work to sample a single $\Gsub$, 
    %Empirically, to achieve high accuracy, the graph sampling-based GCN sets $m=\gamma\cdot\abs{\V_{\text{sub}}}$ (where $0.3\leq\gamma\leq 0.5$). This means that the amount of work incurred by a single sampler instance is quadratic to the subgraph size $\abs{\V_{\text{sub}}}$.
    which is expensive given $m=1000$. 
    \item The sampler is inherently sequential. The vertices in the frontier set should be popped out one at a time to guarantee the quality of the sampled graph $\Gsub$.
\end{enumerate*}

To address the above challenges, we first propose a novel data structure that lowers the complexity of frontier sampler and allows thread-safe parallelization (Section \ref{sec: para sample intra}). We then propose a training scheduler that exploits parallelization within and across sampler instances (Section \ref{sec: para sample inter}). 

\subsection{Dashboard Based Implementation}
\label{sec: para sample intra}

Since vertices in the frontier set is replaced only one at a time, an efficient implementation should allow incremental update of the probability distribution. We propose a ``Dashboard" table to store the status of current and historical frontier vertices (a vertex becomes a historical frontier vertex after it gets popped out of the frontier set). The vertex to be popped out next is selected by probing the Dashboard using randomly generated indices. The data structure ensures that the probability distribution for picking vertices is updated quickly and incrementally.
%Each probing incurs cost of a simple table lookup. When a vertex $v$ is popped out of the frontier set, some book-keeping operations need to be performed on Dashboard. First of all, the role of $v$ changes from ``current" to ``historical". Secondly, the newly added vertex needs to append its corresponding entries towards the end of Dashboard. 
%Note that such book-keeping is incremental, making the overall complexity low. Also, the probing and book-keeping can be easily parallelized without any data hazards. 
Below we formally describe the data structure and operations in the Dashboard-based sampler. The implementation involves two arrays: 

\begin{itemize}
    \item Dashboard $\text{DB}\in\mathbb{R}^{3\times (\eta\cdot m\cdot d)}$: A table maintaining the status and probabilities of current and historical frontier vertices. An entry corresponding to $v_i$ has 3 slots. The first slot is $i$; the second slot is an offset value, helping with the invalidation of all of the $v_i$ related entries when $v_i$ is popped out; the third slot is a value $k$, if $v_i$ is the $k^{th}$ vertex added into DB. Parameter $\eta$ is explained later. 

    \item Index array $\text{IA}\in \mathbb{R}^{2\times (\eta\cdot m\cdot d + 1)}$: An auxiliary array to help cleanup $\text{DB}$ when the Dashboard overflows. The $j^{th}$ entry in IA has 2 slots, the first slot records the starting index of the DB entries corresponding to $v$, where $v$ is the $j^{th}$ vertex added into DB. The second slot is a Boolean flag, which is \verb|True| when $v$ is a current frontier vertex, and \verb|False| when $v$ is a historical frontier vertex.
\end{itemize}

Since the probability of popping out a vertex in frontier is proportional to its degree, we allocate $\text{deg}(v_i)$ continuous entries in $\text{DB}$, for each $v_i$ currently in the frontier set. This way, the sampler only needs to probe DB uniformly at random to achieve line 4 of Algorithm \ref{algo: frontier sampling}. Thus, $\text{DB}$ should contain at least $m\cdot d$ entries, where $d$ is the average degree of vertices in frontier. For ease of incremental update of DB we append the entries for the new vertex and invalidate the entry of popped out vertex, instead of replacing one with the other. To accommodate this we introduce an enlargement factor $\eta$ ($\eta >1$), and set the length of $\text{DB}$ to be $\eta\cdot m\cdot d$ (as an approximation, we set $d$ as the average degree of the training graph $\G$).
%The factor $\eta$ allows $\text{DB}$ to contain some empty (untouched by any vertices) and invalid (occupied by historical frontier vertices, not yet freed) entries, so that we avoid the expensive data shifting operation each time FS and DB get updated.
As the sampling proceeds, eventually, all of the $\eta\cdot m\cdot d$ entries in DB may be filled up by current and historical frontier vertices. When this happens, we free up the space occupied by historical frontier vertices, and resume the sampler. Although ``cleanup" of the Dashboard is expensive, due to the factor $\eta$, such scenario does not happen frequently (see complexity analysis in Section~\ref{sec: para sample inter}).
%Lastly, to make the ``cleanup" phase more efficient, we introduce an auxiliary index array IA. For each $v$ in the frontier set FS, the IA array records the starting index of $v$'s corresponding entries in DB. 
Using the information in IA, the cleanup phase does not need to traverse all of the $\eta\cdot m\cdot d$ entries in DB to locate the space to be freed. When DB is full, the entries in DB can correspond to at most $\eta\cdot m\cdot d$ vertices. Thus, we safely set the capacity of IA to be $\eta\cdot m\cdot d$ + 1. Slot 1 of the last entry of IA contains the current number of used DB entries.
%\footnote{It is possible that the frontier set FS contains many high degree vertices, making the average degree of FS vertices larger than $\eta\cdot d$. In this case, DB is simply too small, then we keep doubling $\eta$ until $\eta\cdot m\cdot d$ is large enough. This scenario rarely happens, and we do not consider it in our analysis.}
%\footnote{Explanation on $(m+1)$ factor: If there are $x$ historical and current frontier vertices, then IA contains $x+1$ entries with meaningful values. Last IA entry, slot 1 contains the number of occupied DB entries. Slot 2 is set to \texttt{FALSE}.}

%Digging further into the design details, each of the $\eta\cdot m\cdot d$ DB entry contains 3 slots, and each of the IA entry contains 2 slots, making $\text{DB}\in\mathbb{R}^{3\times(\eta\cdot m\cdot d)}$ and $\text{IA}\in \mathbb{R}^{2\times (\eta\cdot m\cdot d)}$. 
%For a DB entry corresponding to $v_i$, the first slot is $i$; the second slot is an offset value, helping with the invalidation of all of the $v_i$ related entries when $v_i$ is popped out; the third slot is a value $k$, if $v_i$ is the $k^{th}$ vertex added into DB ($k$ counted after the previous cleanup phase). 
%For the $j^{th}$ entry in IA, the first slot records the starting index of the DB entries corresponding to $v$, where $v$ is the $j^{th}$ vertex added into DB. The second slot is a Boolean flag, which is \verb|True| when $v$ is a current frontier vertex, and \verb|False| when $v$ is a historical frontier vertex. 

\subsection{Intra- and Inter-Subgraph Parallelization}
\label{sec: para sample inter}

% ===============================
% data lvl & task lvl parallelism
% ===============================

%The benefits of Dashboard based frontier sampler are that: 
%\begin{enumerate*}
%    \item It supports incremental change in DB when frontier set is updated; and
%    \item Parallel access of DB can be easily supported.
%\end{enumerate*}

%However, the design still has its limitation. Namely, due to the inherent sequential nature of frontier sampling algorithm, it is difficult for a single sampler instance to exploit massive amount of parallelism. 

%This section proposes a hybrid scheme to exploit parallelism within and across multiple sampler instances. We take advantage of the Dashboard data structure, and devise a scheduler in GCN training to scale our sampler to many processing cores. 

Since our subgraph-based GCN approach requires independently sampling multiple  subgraphs, we can easily sample different subgraphs on different processors in parallel. Also, we can further parallelize within each instance of sampling by exploiting the parallelism in  probing, book-keeping  and cleanup of DB, which is presented next.

\begin{algorithm}[!ht]
\caption{Parallel Dashboard based frontier sampling}
\label{algo: frontier sampling par}
\begin{algorithmic}[1]
\renewcommand{\algorithmicrequire}{\textbf{Input:}}
\renewcommand{\algorithmicensure}{\textbf{Output:}}
\Require{Original graph $\G(\V,\E)$; Frontier size $m$; Budget $n$; Enlargement factor $\eta$; Number of processors $p$}
\Ensure{Induced subgraph $\Gsub(\V_{\text{sub}},\E_{\text{sub}})$}
\State $d\gets \abs{\E}/\abs{\V}$
\State $\text{DB}\gets$ Array of $\mathbb{R}^{3\times (\eta\cdot m\cdot d)}$ with value \verb|INV|\color{blue}\Comment{INValid}\color{black}
\State $\text{IA}\gets$ Array of $\mathbb{R}^{2\times (\eta\cdot m\cdot d + 1)}$ with value \verb|INV|
\State $\text{FS}\gets m$ vertices selected uniformly at random from $\V$
\State $\V_{\text{sub}}\gets \Set{v|v\in\text{FS}}$
\State $\text{FS}\gets $ Indexable list of vertices converted from set FS
%\State $\text{idx}\gets 0;\qquad\text{cnt}\gets 0$
\State $\text{IA}\lbrack 0,0 \rbrack\gets 0;\qquad\text{IA}\lbrack 1,0 \rbrack\gets $\verb|True|$;$
\For{$i=1$ to $m$}\color{blue}\Comment{Initialize IA from FS}\color{black}
    \State $\text{IA}\lbrack 0,i \rbrack\gets \text{IA}\lbrack 0,i-1 \rbrack+\text{deg}(\text{FS}\lbrack i-1 \rbrack)$
    \State $\text{IA}\lbrack 1,i \rbrack\gets (i\neq m)?\texttt{True}:\texttt{False}$
\EndFor

\ParFor{$i=0$ to $m-1$}\color{blue}\Comment{Initialize DB from FS}\color{black}
    \For{$k=\text{IA}\lbrack i \rbrack$ to $\text{IA}\lbrack i+1 \rbrack-1$}
        \State $\text{DB}\lbrack 0,k\rbrack\gets \text{FS}\lbrack i \rbrack$
        \State $\text{DB}\lbrack 1,k\rbrack\gets (k\neq\text{IA}\lbrack i \rbrack) ? (k-\text{IA}\lbrack i \rbrack):-\text{deg}(\text{FS}\lbrack i \rbrack)$
        \State $\text{DB}\lbrack 2,k\rbrack\gets i$
    \EndFor
\EndParFor

\State $s\gets m$
\For{$i=m$ to $n-1$}\color{blue}\Comment{Sampling main loop}\color{black}
    \State $v_{\text{pop}}\gets$ pardo\_POP\_FRONTIER$($DB,$p)$
    \State $v_{\text{new}}\gets $ Vertex sampled from $v_{\text{pop}}$'s neighbors
    \If{$\text{deg}(v_{\text{new}})>\eta\cdot m\cdot d-\text{IA}\lbrack 0,s \rbrack+1$}
        \State $\text{DB}\gets$ pardo\_CLEANUP$(\text{DB},\text{IA},p)$
        \State $s\gets m-1$
    \EndIf
    \State pardo\_ADD\_TO\_FRONTIER $(v_{\text{new}},s,\text{DB},\text{IA},p)$
    \State $\V_{\text{sub}}\gets \V_{\text{sub}}\cup\Set{v_{\text{new}}}$
    \State $s\gets s+1$
\EndFor

\State $\Gsub\gets $ Subgraph of $\G$ induced by $\V_{\text{sub}}$ 
\State \Return $\Gsub(\V_{\text{sub}},\E_{\text{sub}})$ 
\end{algorithmic} 
\end{algorithm}

\begin{algorithm}[!ht]
\caption{Functions in Dashboard Based Sampler}
\label{algo: frontier sampling func}
\begin{algorithmic}[1]
\Function{pardo\_POP\_FRONTIER}{$\text{DB},p$}
\State $\text{idx}_{\text{pop}}\gets $\verb|INV|\color{blue}\Comment{Shared variable}\color{black}
\ParFor{$j=0$ to $p-1$}
    \While{$\text{idx}_{\text{pop}}==$ \texttt{INV}}\color{blue}\Comment{Probing DB}\color{black}
        \State $\text{idx}_p\gets $ Index generated uniformly at random
        \If {$\text{DB}\lbrack 0,\text{idx}_p \rbrack\neq\texttt{INV}$}
            \State $\text{idx}_{\text{pop}}\gets \text{idx}_p$            
        \EndIf
    \EndWhile
    \State BARRIER
    \State $v_{\text{pop}}\gets \text{DB}\lbrack 0,\text{idx}_{\text{pop}} \rbrack$
    \State $\text{offset}\gets \text{DB}\lbrack 1,\text{idx}_{\text{pop}} \rbrack$
    \State $\text{idx}_{\text{start}}\gets (\text{offset}>0)?(\text{idx}_{\text{start}}-\text{offset}):\text{idx}_{\text{start}}$
    %\State $\text{idx}_{\text{end}}\gets \text{idx}_{\text{start}}-\text{DB}\lbrack 1,\text{idx}_{\text{start}} \rbrack$
    \State $\text{deg}\gets \text{DB}\lbrack 1,\text{idx}_{\text{start}} \rbrack$
    \For{$k=j\cdot \frac{\text{deg}}{p}$ to $(j+1)\cdot \frac{\text{deg}}{p}-1$}\color{blue}\Comment{Update DB}\color{black}
        \State $\text{DB}\lbrack 0,k+\text{idx}_{\text{start}} \rbrack\gets \texttt{INV}$
    \EndFor
    \State BARRIER
\EndParFor
\State $\text{IA}\lbrack 1,\text{DB}\lbrack 2, \text{idx}_{\text{pop}} \rbrack \rbrack\gets \texttt{False}$\color{blue}\Comment{Update IA}\color{black}
\State \Return $v_{\text{pop}}$
\EndFunction

\Function{pardo\_CLEANUP}{$\text{DB},\text{IA}$,$p$}
    \State $\text{DB}_\text{new}\gets $ New, empty dashboard
    \State $k\gets$ Cumulative sum of $\text{IA}\lbrack 0,: \rbrack$ masked by $\text{IA}\lbrack 1,: \rbrack$
    \ParFor {$i=0$ to $p-1$}
        \State Move entries from DB to $\text{DB}_\text{new}$ by offsets in $k$
    \EndParFor
    \ParFor {$i=0$ to $p-1$}
        \State Re-index IA based on $\text{DB}_\text{new}$    
    \EndParFor
    \Return $\text{DB}_\text{new}$
\EndFunction

\Function{pardo\_ADD\_TO\_FRONTIER}{$v_\text{new},i,\text{DB},\text{IA},p$}

\State $\text{IA}\lbrack 0,i+1 \rbrack\gets \text{IA}\lbrack 0,i \rbrack+d;\qquad \text{IA}\lbrack 1,i \rbrack\gets \texttt{True};$

\State $\text{idx}\gets \text{IA}\lbrack 0,i \rbrack$
\State $d\gets\text{deg}(v_\text{new})$
\ParFor{$j=0$ to $p-1$}
    \For{$k=\text{idx}+j\cdot \frac{d}{p}$ to $\text{idx}+(j+1)\cdot \frac{d}{p}-1$}
        \State $\text{DB}\lbrack 0,k \rbrack\gets n$
        \State $\text{DB}\lbrack 1,k \rbrack\gets (k\neq \text{idx})?(k-\text{idx}):-d$
        \State $\text{DB}\lbrack 2,k \rbrack\gets i$
    \EndFor
\EndParFor

\EndFunction
\end{algorithmic}
\end{algorithm}

Algorithm \ref{algo: frontier sampling par} shows the details of Dashboard-based parallel frontier sampling, where all arrays are zero-based. Considering the main loop (lines 16 to 25), we analyze the complexity of the three functions in Algorithm \ref{algo: frontier sampling func}. Denote $\text{COST}_{\text{rand}}$ and $\text{COST}_{\text{mem}}$ as the cost to generate one random number and to perform one memory access, respectively. 

\paragraph*{pardo\_POP\_FRONTIER}

Anytime during sampling, on average, the ratio of valid DB entries (those occupied by current frontier vertices) over total number of DB entries is $1/\eta$. Probability of one probing falling on a valid entry equals $1/\eta$. 
%Probability of at least one of $p$ probings falling on a valid entry equals $1-\big(1-\frac{1}{\alpha}\big)^p$.
Expected number of rounds for $p$ processors to generate at least 1 valid probing can be shown to be $1/\paren{1-\paren{1-\frac{1}{\alpha}}^p}$, where one round refers to one repetition of lines 5 to 7 of Algorithm \ref{algo: frontier sampling func}. After selection of $v_{\text{pop}}$, $\text{deg}(v_{\text{pop}})$ number of slots needs to be updated to invalid values \verb|INV|. 
Since this operation occurs $(n-m)$ times, the para\_POP\_FRONTIER function incurs $(n-m) \paren{\frac{1}{1-(1-1/\alpha)^p}\cdot \text{COST}_{\text{rand}}+\frac{d}{p}\cdot \text{COST}_{\text{mem}}}$ cost.

\paragraph*{pardo\_CLEANUP}

Each time cleanup of DB happens, we need one traversal of IA to calculate the cumulative sum of indices (slot 1) masked by the status (slot 2), so as to obtain the new location for each valid entries in DB. On expectation, only $\eta\cdot m$ entries of IA is filled, so this step costs $\eta\cdot m$. Afterwards, only the valid entries in DB will be moved to the new, empty DB based on the accumulated shift amount. This translates to $3\cdot m\cdot d$ number of memory operations. The para\_CLEANUP function is fully parallelized. %For ease of analysis, we ignore the complexity of masked cumulative sum on IA (as shown by analysis later on, $\eta\ll 3\cdot d$). 
The cleanup happens only when DB is full, i.e., $\frac{n-m}{(\eta-1)m}$ times throughout sampling. Thus, the cost of cleanup is $\frac{n-m}{(\eta-1)\cdot m}\cdot \frac{3\cdot m\cdot d}{p}\cdot \text{COST}_{\text{mem}}$. We ignore the cost of computing the cumulative sum as $\eta m\ll 3md$.

\paragraph*{pardo\_ADD\_TO\_FRONTIER}

Adding a new frontier $v_n$ to DB requires appending $\text{deg}(v_n)$ new entries to DB. This corresponds to cost of $(n-m)\cdot \frac{3\cdot d}{p}\cdot \text{COST}_{\text{mem}}$.

Overall cost to sample one subgraph with $p$ processors is:

\begin{align}
\label{eq: frontier single}
\left(\frac{\text{COST}_{\text{rand}}}{1-(1-1/\eta)^p} +\left( 4 + \frac{3}{\eta -1}\right)\frac{d\cdot \text{COST}_{\text{mem}}}{p}\right)\cdot (n-m)
\end{align}

Assuming $\text{COST}_{\text{mem}} = \text{COST}_{\text{rand}}$, we have the following scalability bound:

\begin{theorem}
\label{thm: frontier scale}
For any given $\epsilon > 0$, Algorithm~\ref{algo: frontier sampling} guarantees a speedup of at least $\frac{p}{1+\epsilon}, \forall p\leq \epsilon d\paren{4+\frac{3}{\eta-1}} -\eta$. 
\end{theorem}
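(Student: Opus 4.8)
The plan is to read off the speedup directly from the per-subgraph cost expression~\eqref{eq: frontier single}, specialized once to $p=1$ and once to general $p$. Writing $c := \text{COST}_{\text{mem}} = \text{COST}_{\text{rand}}$ and abbreviating the memory-cost coefficient by $B := 4 + \frac{3}{\eta-1}$, note that at $p=1$ the random-probing factor $\frac{1}{1-(1-1/\eta)^{p}}$ collapses to $\eta$, so the serial cost is $T_1 = c(n-m)\paren{\eta + Bd}$, while the $p$-processor cost is $T_p = c(n-m)\paren{\frac{1}{1-(1-1/\eta)^{p}} + \frac{Bd}{p}}$. The speedup is $S_p = T_1/T_p$, and after clearing denominators and cancelling the common positive factor $c(n-m)$, the target $S_p \ge \frac{p}{1+\epsilon}$ becomes the single scalar inequality
\begin{align}
\frac{p}{1-(1-1/\eta)^{p}} \;\le\; (1+\epsilon)\,\eta + \epsilon\, Bd. \label{eq: target scale}
\end{align}

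The crux is a clean, $p$-independent upper bound on the left-hand side. Setting $x := 1-1/\eta \in (0,1)$, I would establish the lemma
\[
\frac{p}{1-x^{p}} \;\le\; p + \eta ,
\]
which is equivalent to $x^{p} \le \frac{\eta}{p+\eta}$. This follows from the elementary bound $1+t \le e^{t}$ applied twice: on one side $x^{p} = (1-1/\eta)^{p} \le e^{-p/\eta}$, and on the other $e^{p/\eta} \ge 1 + p/\eta$ gives $e^{-p/\eta} \le \frac{1}{1+p/\eta} = \frac{\eta}{p+\eta}$; chaining the two proves the claim. Dividing through by $p$, this says the expected number of probing rounds per pop, $\frac{1}{1-(1-1/\eta)^{p}}$, is at most $1 + \eta/p$, i.e.\ it tends to the ideal single round as $p$ grows — which is precisely the mechanism behind the near-linear speedup.

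With the lemma in hand the theorem reduces to routine algebra. Under the hypothesis $p \le \epsilon Bd - \eta$ we have $p + \eta \le \epsilon Bd$, and since $(1+\epsilon)\eta \ge 0$ this gives the chain
\[
\frac{p}{1-(1-1/\eta)^{p}} \;\le\; p + \eta \;\le\; \epsilon Bd \;\le\; (1+\epsilon)\eta + \epsilon Bd,
\]
which is exactly~\eqref{eq: target scale}. Hence $S_p \ge \frac{p}{1+\epsilon}$ for every $p$ in the stated range.

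I expect the main obstacle to be the lemma bounding $\frac{p}{1-(1-1/\eta)^{p}}$: the difficulty is selecting an auxiliary estimate (here $1+t \le e^{t}$) that is simultaneously tight enough that the resulting bound $p+\eta$ is dominated by the neighbor-explosion-free term $\epsilon Bd$, yet loose enough to admit a closed form free of $p$. Once that bound is fixed, specializing the cost formula, clearing denominators in the speedup condition, and the concluding chain are all mechanical; the slack term $(1+\epsilon)\eta$ on the right of~\eqref{eq: target scale} is exactly what absorbs the additive $\eta$ produced by the lemma.
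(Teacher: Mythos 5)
Your proposal is correct and follows essentially the same route as the paper: the identical cost specializations $T_1$ and $T_p$ from Equation~\eqref{eq: frontier single}, the identical key bound $\frac{1}{1-(1-1/\eta)^p} \le \frac{p+\eta}{p}$ proved the same way via $1+t \le e^t$, and the same use of the hypothesis $p+\eta \le \epsilon d\paren{4+\frac{3}{\eta-1}}$. The only cosmetic difference is that you clear denominators into a single scalar inequality with $(1+\epsilon)\eta$ as explicit slack, whereas the paper bounds the numerator and denominator of the speedup ratio separately (dropping $\eta$ from the numerator); these are equivalent.
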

\begin{proof}
Note that $\frac{1}{1-(1-1/\eta)^p} \leq \frac{1}{1-\exp(-p/\eta)} \leq  \frac{\eta + p}{p}$. This follows from $\frac{1}{1-e^{-x}} = \frac{1}{1-\frac{1}{e^x}} \leq \frac{1}{1-\frac{1}{1+x}} \leq \frac{x+1}{x}$. Further, since $p \leq \epsilon d(4+3/(\eta-1)) - \eta$, we have $\frac{\eta + p}{p} \leq \frac{\epsilon d(4+3/(\eta-1))}{p}$.
% We assume that $p\geq 2$. Therefore, 
% \begin{align*}
%     \frac{p}{1-\left(1 - 1/\eta\right)^p} \leq  \frac{p}{1-\left(1 - 1/\eta\right)^2} \leq \frac{\frac{\epsilon d}{\eta}\left(1-\frac{1}{2\eta}\right)}{1-\left(1 - 1/\eta\right)^2}.
% \end{align*}
%Therefore, we have $\frac{1}{1-\left(1 - 1/\eta\right)^p} \leq \frac{4\epsilon d}{p}$.
Now, speedup obtained by Algorithm~\ref{algo: frontier sampling} compared to a serial implementation ($p=1$) is
\begin{align*}
    &\frac{\paren{\eta + d(3/(\eta-1) + 4)}(n-m)}{\left(\frac{1}{1-(1-1/\eta)^p} + \frac{d}{p}(3/(\eta-1) + 4)\right)(n-m)}\\
    &\geq \frac{d(3/(\eta-1) + 4)}{\frac{\epsilon d}{p}(3/(\eta-1) + 4) + \frac{d}{p}(3/(\eta-1) + 4)} \geq \frac{p}{1+\epsilon}.
\end{align*}
\end{proof}

Setting $\epsilon=0.5$ and $\eta=3$, Theorem \ref{thm: frontier scale} guarantees good scalability ($p/1.5$) up to $p=2.25\cdot d-3$ processors. 
While the scalability can be high for dense graphs, it is difficult to scale the sampler to massive number of processors on sparse graphs. The total parallelism available is bound by the graph degree. In summary, the parallel Dashboard based frontier sampler (1) leads to lower serial complexity by incremental update on probability distribution, and
    (2) scales well up to $p = \mathcal{O}(d)$.
%We observe the following: 
%\begin{enumerate*}
%    \item Amount of parallelism $p$ is not very large (no more than the average degree $d$); and
%    \item For small number of processors $p$, the total workload of $p$ processors $p\cdot \text{COST}_{\text{samp}}\approx \big(\eta \cdot \text{COST}_{\text{rand}}+\frac{3\cdot d}{\eta}\cdot \text{COST}_{\text{mem}}+\text{constant}\big)\cdot (b-m)$ admits a minimizer $\eta=\sqrt{3\cdot d\cdot \frac{\text{COST}_{\text{mem}}}{\text{COST}_{\text{rand}}}}$. The above analysis uses binomial approximation $(1+x)^{\alpha}\approx 1+\alpha\cdot x$
%\end{enumerate*}

To further scale the sampler, we exploit task parallelism by a GCN training scheduler. Since the topology of the training graph $\G$ is fixed over the training iterations, sampling and GCN computation can proceed in an interleaved fashion, without any dependency constraints. By Algorithm \ref{algo: gsaint training sample}, the scheduler maintains a pool of sampled subgraphs $\Set{\G_i}$. When $\Set{\G_i}$ is empty, the scheduler launches $p_{\text{inter}}$ frontier samplers in parallel, and fill $\Set{\G_i}$ with subgraphs independently sampled from the training graph $\G$. Each of the $p_{\text{inter}}$ sampler instances runs on $p_{\text{intra}}$ processing units. Thus, the scheduler exploits both intra- and inter-subgraph parallelism. In each training iteration, we remove a subgraph $\Gsub$ from $\Set{\G_i}$, and build a complete GCN upon $\Gsub$. Forward and backward propagation stay unchanged as lines 4 to 13 in Algorithm \ref{algo: gsaint training}.

%Tradeoff exists between the 

During construction of $\Set{\G_i}$, total amount of parallelism $p_{\text{intra}}\cdot p_{\text{inter}}$ is fixed on the target platform. The value of $p_\text{intra}$ and $p_\text{inter}$ should be carefully chosen based on the trade-off between the two levels of parallelism. Note that the operations on DB mostly involve a chunk of memory with continuous addresses. This indicates that intra-subgraph parallelism can be well exploited at the instruction level using vector instructions (e.g., AVX). In addition, note that most of the memory traffic going into DB is in a random manner during sampling. This indicates that ideally, DB should be stored in cache. 
As a coarse estimate, with $m=1000$, $\eta=2$, $d=30$, the memory consumption by one DB is $480\text{KB}$ \footnote{We use \texttt{INT32} for slot 1 of DB, and \texttt{INT16} for slots 2 and 3.}. This indicates that DB mostly fits into the private L2 cache size ($256\text{KB}$) in modern shared memory parallel machines. Therefore, during sampling, we bind one sampler to one processor core, and use AVX instructions to parallelize within a single sampler. For example, on a 40-core machine with AVX2, $p_\text{intra}=8$ and $p_\text{inter}=40$. 

\begin{algorithm}
\caption{GCN training with parallel frontier sampler}
\label{algo: gsaint training sample}
\begin{algorithmic}[1]
\renewcommand{\algorithmicrequire}{\textbf{Input:}}
\renewcommand{\algorithmicensure}{\textbf{Output:}}
\Require Training graph $\G(\V,\E,\Hh^{(0)})$; Labels $\Label$; Sampler parameters $m,n,\eta$; Parallelization parameters $p_{\text{inter}}, p_{\text{intra}}$
\Ensure Trained weights $\Set{\Wself^{(\ell)},\Wneigh^{(\ell)}|1\leq\ell\leq L}$
%\State Construct $L$-layer GCN to get $\Set{\V^{(\ell)}}$, $\Set{\E^{(\ell)}}$
\State $\Set{\G_i}\gets \emptyset$\color{blue}\Comment{Set of unused subgraphs}\color{black}
\While{not terminate}\color{blue}\Comment{Iterate over mini-batches}\color{black}
    \If{$\Set{\G_i}$ is empty}
        \ParFor{$p=0$ to $p_{\text{inter}}-1$}
            \State $\Set{\G_i}\gets \Set{\G_i}\cup \text{SAMPLE}_{\text{G}}(\G,m,n,\eta,p_{\text{intra}})$
        \EndParFor
    \EndIf
    \State $\Gsub\gets $ Subgraph popped out from $\Set{\G_i}$
    \State $\Set{\Vgs^{(\ell)}}, \Set{\E_{\text{GS}}^{(\ell)}}\gets $ GCN construction on $\G_{\text{sub}}$
    \State Forward and backward propagation of GCN
\EndWhile
%\STATE $\Vls^{(L)}\gets $ Sampled nodes in $\V^{(L)}$ (uniformly at random)\\
\State \Return $\Set{\Wself^{(\ell)}},\Set{\Wneigh^{(\ell)}}$
\end{algorithmic} 
\end{algorithm}

\section{Parallel Training Algorithm}
\label{sec: para spmm}

In this section, we present parallelization techniques for the forward and backward propagation. Specifically, 
the graph sampling based GCN enables a simple partitioning scheme that guarantees close-to-optimal feature propagation performance. 

\subsection{Computation Kernels in Training}
\label{sec: para feat prop kernel}
Forward and backward propagation involves two kernels: 
\begin{itemize}
    \item Feature propagation in subgraph (step $\paren{\Ags^{(\ell)}}^T \cdot \Hgs^{(\ell-1)}$);
    \item Dense matrix multiplication (steps involving the weight matrices $\Wneigh^{(\ell)}$ and $\Wself^{(\ell)}$).
\end{itemize}

We study the parallel feature propagation in Section \ref{sec: para feat prop}. As for the computation on dense matrix multiplication, this is a standard BLAS level 2 routine, which can be efficiently parallelized using libraries such as Intel \textsuperscript{\small\textregistered}
 MKL \cite{MKL}. 

\subsection{Parallel Feature Propagation within Subgraph}
\label{sec: para feat prop}

During training, each of the $\Vsub^{(\ell)}$ nodes pulls features from its neighbors, along the inter-layer edges. Essentially, this can be viewed as feature propagation within $\Gsub$. 

A similar problem, label propagation within graphs, has been extensively studied in the literature. State-of-the-art methods based on
vertex-centric \cite{prop_blocking}, edge-centric \cite{xstream} and partition-centric \cite{gpop} paradigms perform vertex partitioning on graphs so that processors can work independently in parallel. The work in \cite{hidden_dim} also performs label partitioning along with graph partitioning when the label size is large. 
In our case, we borrow the above ideas to allow two dimensional partitioning along the graph as well as the feature dimensions. 
However, we also realize that the aforementioned techniques may lead to sub-optimal performance in our GCN, due to two reasons:

\begin{itemize}
    \item The propagated data from each vertex is a long vector rather than a single value label. 
    \item Our graph sizes are small after graph sampling, so partitioning of the graph may not lead to significant advantage. 
\end{itemize}

Below, we analyze the computation and communication costs of feature propagation after graph and feature partitioning. We temporarily ignore the issues of load-balancing and pre-processing overhead, and address these issues later on. 
For conciseness, we use $\G(\V,\E)$ to refer to $\Gsub(\Vsub,\Esub)$.

Suppose we partition $\G$ into $Q_v$ disjoint vertex partitions $\Set{\V^{(i)}|0\leq i\leq Q_v-1}$. Let the set of vertices that send features to vertices in $\V^{(i)}$ be $\V_{\text{src}}^{(i)}=\Set{u|(u,v)\in\E\land v\in \V^{(i)}}$. Note that $\V^{(i)}\subseteq\V_{\text{src}}^{(i)}$, since we follow the design in \cite{graphsage} to add a self-connection to each vertex. 
We further partition the feature vector $\bm{h}_v\in \mathbb{R}^f$ of each vertex $v$ into $Q_f$ equal parts $\Set{\bm{h}_v^{(i)}|0\leq i\leq Q_f-1}$. 
Each of the processors is responsible for propagation of $\Hh^{(i,j)}=\Set{\bm{h}_v^{(j)}|v\in \V_{\text{src}}^{(i)}}$, flowing from $\V_{\text{src}}^{(i)}$ into $\V^{(i)}$ (where $0\leq i\leq Q_v-1$ and $0\leq j\leq Q_f-1$).

Define the ratio $\gamma_{v}=\frac{\size{\V^{(i)}_\text{src}}}{\size{\V}}$. While $\gamma_{v}$ depends on the partitioning algorithm, it is always bound by $\frac{1}{Q_v}\leq \gamma_{v}\leq 1$.

Let $n=\size{\V}$ and $f=\size{\bm{h}_v}$. So $\size{\V^{(i)}}=\frac{n}{Q_v}$ and $\size{\bm{h}_v^{(i)}}=\frac{f}{Q_f}$. 
%and the feature vector $\bm{h}_v$ of each vertex $v$ into $q$ partitions, $\Set{\bm{h}_v^{(i)}|0\leq i\leq q-1}$. Each processor computes the propagation of 

In our performance model, we assume $p$ processors operating in parallel. Each processor is associated with a private fast memory (cache). The $p$ processors share a slow memory (DRAM). Our objective in partitioning is to minimize the overall processing time in the parallel system. After partitioning, each processor owns $\frac{Q_v\cdot Q_f}{p}$ number of $\Hh^{(i,j)}$, and propagates its $\Hh^{(i,j)}$ into $\V^{(i)}$. Due to the irregularity of graph edge connections, access into $\Hh^{(i,j)}$ is random. On the other hand, using CSR format, the neighbor lists of vertices in $\V^{(i)}$ can be streamed into the processor, without the need to stay in cache. In summary, an optimal partitioning scheme should:
\begin{itemize}
    \item Let each $\Hh^{(i,j)}$ fit into the fast memory;
    \item Utilize all of the available parallelism;
    \item Minimize the total computation workload;
    \item Minimize the total slow-to-fast memory traffic.
\end{itemize}

Each round of feature propagation has $\frac{n}{Q_v}\cdot d\cdot \frac{f}{Q_f}$ computation, and $2\cdot \frac{n}{Q_v}\cdot d+8\cdot n\cdot \gamma_{v}\cdot \frac{f}{Q_f}$ communication (in bytes)\footnote{Given that sampled graphs are small, we use \texttt{INT16} to represent the vertex indices. We use \texttt{DOUBLE} to represent each feature value.}. Computation and computation over $Q_v\cdot Q_f$ rounds are:

\begin{align}
    g_\text{comp}(Q_v,Q_f)&=n\cdot d\cdot f\\
    g_\text{comm}(Q_v,Q_f)&=2\cdot Q_f\cdot n\cdot d+8\cdot Q_v\cdot n\cdot f\cdot \gamma_v
\end{align}

Note that $g_\text{comp}(Q_v,Q_f)$ is independent of the partitioning scheme. We formulate \textit{communication minimization problem}:

\begin{align}\label{eqn:opt_part}
%\begin{aligned}
& \underset{Q_v,Q_f}{\text{minimize}}
& & g_\text{comm}(Q_v,Q_f)=2Q_f\cdot nd + 8Q_v\cdot nf\gamma_v \\
& \text{subject to}
& & Q_v Q_f\geq p;\quad \frac{8nf\gamma_v}{Q_f}\leq S_\text{cache};\quad Q_v,Q_f\in \mathbb{Z}^+
%\end{aligned}
\end{align}
Next, we prove that \emph{without any graph partitioning} we can obtain a 2-approximation for this problem for small subgraphs.
\begin{theorem}
$Q_v=1, Q_f=\max\left\{p, \frac{8nf}{S_\text{cache}}\right\}$ results in a 2-approximation of communication minimization problem, irrespective of the partitioning algorithm, for $p\leq \frac{4f}{d}$ and $2nd \leq S_\text{cache}$.
\end{theorem}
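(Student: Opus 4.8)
The plan is to show that each of the two terms in the proposed solution's objective is bounded by the optimal value $\text{OPT}$, so that their sum is at most $2\,\text{OPT}$. First I would check that the candidate $Q_v=1$, $Q_f^{\star}=\max\{p,\frac{8nf}{S_\text{cache}}\}$ is feasible. Setting $Q_v=1$ forces $\gamma_v=1$, since the single source set $\V_{\text{src}}^{(0)}$ equals all of $\V$ by the self-connections (every $u$ satisfies $(u,u)\in\E$ with $u\in\V^{(0)}=\V$). The load constraint $Q_vQ_f\ge p$ then reduces to $Q_f^{\star}\ge p$, and the cache constraint $\frac{8nf\gamma_v}{Q_f}\le S_\text{cache}$ reduces to $Q_f^{\star}\ge\frac{8nf}{S_\text{cache}}$; both hold by the definition of $Q_f^{\star}$. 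Substituting $Q_v=1$ and $\gamma_v=1$ gives the candidate objective $g_\text{comm}(1,Q_f^{\star})=2Q_f^{\star}nd+8nf$.

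Next I would establish the key lower bound $\text{OPT}\ge 8nf$. For \emph{any} feasible pair $(Q_v,Q_f)$ produced under \emph{any} partitioning algorithm, the bound $\gamma_v\ge\frac{1}{Q_v}$ from the excerpt makes the second term satisfy $8Q_v nf\gamma_v\ge 8nf$; dropping the non-negative first term yields $\text{OPT}\ge 8nf$. This is the crux of the argument: $\gamma_v\ge1/Q_v$ shows that no partitioning can push the feature-communication term below $8nf$, which is precisely the value our $Q_v=1$ solution already incurs, and this makes the bound universal over all partitioning algorithms.

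It then remains to bound the first term $2Q_f^{\star}nd$ by $8nf$ as well, via a case split on which argument attains the maximum in $Q_f^{\star}$. If $Q_f^{\star}=\frac{8nf}{S_\text{cache}}$, then $2Q_f^{\star}nd=\frac{16n^2 df}{S_\text{cache}}\le 8nf$ exactly because $2nd\le S_\text{cache}$. If instead $Q_f^{\star}=p$, then $2Q_f^{\star}nd=2pnd\le 8nf$ exactly because $p\le\frac{4f}{d}$. In either case $2Q_f^{\star}nd\le 8nf$, so the candidate objective obeys $g_\text{comm}(1,Q_f^{\star})=2Q_f^{\star}nd+8nf\le 16nf\le 2\,\text{OPT}$, which is the claimed 2-approximation.

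The only genuine obstacle I anticipate is pinning down the lower bound on $\text{OPT}$ correctly: one must recognize that the worst case for our solution is when the optimal partitioning achieves the minimal $\gamma_v=1/Q_v$, collapsing the second term to the partition-\emph{independent} constant $8nf$ and leaving the two hypotheses $p\le 4f/d$ and $2nd\le S_\text{cache}$ to do exactly the work of controlling the first term. Integrality of $Q_f^{\star}$ (taking a ceiling) is a minor point that perturbs the bounds only by lower-order terms, and I would simply note it in passing.
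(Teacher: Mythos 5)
Your proposal is correct and takes essentially the same route as the paper's own proof: the identical lower bound $\min g_\text{comm}\geq 8nf$ derived from $\gamma_v\geq 1/Q_v$, the identical case split on which argument attains the maximum in $Q_f=\max\left\{p,\frac{8nf}{S_\text{cache}}\right\}$, and the hypotheses $p\leq \frac{4f}{d}$ and $2nd\leq S_\text{cache}$ deployed exactly as in the paper to bound the total at $16nf$, giving the ratio $2$. Your explicit feasibility check, the ``each term at most OPT'' framing, and the passing remark on integrality of $Q_f$ are minor presentational refinements of the same argument.
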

\begin{proof}

% Assume that $\gamma^*$ is the value of $\gamma_P$ for the optimal value of $P$. We define $g'(P, Q) = 2Qnd + 8Pnf\gamma^*$. Clearly, $\min g'(P, Q) \leq \min g_\text{comm}(P, Q)$, with the additional constraint $P \geq1/\gamma^*$, and relaxing $P, Q \in \mathbb{Z}^+$ to $P, Q \in \mathbb{R}$, and $P, Q \geq 1$.

% Based on the constraints it is clear that the optimal will be at either of the following points:
% \paragraph{Case 1 $(1/\gamma^*, \frac{8nf\gamma^*}{S_\text{cache}})$} This is a feasible point if $PQ \geq C$, i.e., $\frac{1}{\gamma^*} \frac{8nf\gamma^*}{S_\text{cache}} \geq C$, i.e., $\frac{8nf}{S_\text{cache}}\geq C$.
% Therefore, our solution is 
% \begin{align*}
%     &g_\text{comm}\left(1, \frac{8nf}{S_\text{cache}}\right) = 2nd\frac{8nf}{S_\text{cache}} + 8nf \\
%     &= 8nf\left(\frac{2nd}{S_\text{cache}} + 1\right) \leq 8nf(1+1) = 16nf.
% \end{align*}
% And,
% \begin{align*}
% g'\left(1/\gamma^*, \frac{8nf\gamma^*}{S_\text{cache}}\right) &= 2nd\frac{8nf\gamma^*}{S_\text{cache}} + 8nf \geq 8nf.
% %&\geq 2ndC\gamma^* + 8nf \geq 2nd + 8nf.
% \end{align*}

% \paragraph{Case 2 $(\frac{1}{\gamma^*}, C\gamma^*)$} This is a feasible point, if $C\gamma^* \geq \frac{8nf\gamma^*}{S_\text{cache}} \implies C \geq \frac{8nf}{S_\text{cache}}$.
% Therefore, our solution is 
% \begin{align*}
% g_\text{comm}\left(1, C\right) =  2ndC + 8nf \leq 16nf.
% \end{align*}
% This is due to $C \leq 4f/d$. And,
% \begin{align*}
%     g'\left(\frac{1}{\gamma^*}, C\gamma^*\right) &=
%     2ndC\gamma^* + 8nf \geq 8nf.
% \end{align*}

Note that since $Q_v, Q_f \geq  1$ and $\gamma_v \geq 1/Q_v$, $\forall Q_v, Q_f$:
\begin{align*}
    g_\text{comm}(Q_v, Q_f) \geq 2Q_f nd + 8Q_v nf\frac{1}{Q_v} \geq 8nf.
\end{align*}
Set $Q_v=1$ and $Q_f = \max\left\{p, \frac{8nf}{S_\text{cache}}\right\}$. Clearly, $\gamma_v = 1$.
\paragraph{Case 1, $p \geq \frac{8nf}{S_\text{cache}}$} In this case, $Q_f=p \geq 8nf/S_\text{cache}$. Thus both constraints are satisfied. And,
 \begin{align*}
 &g_\text{comm}\left(1, p\right) =  2ndp + 8nf \\&= 8nf\left(\frac{pd}{4f} + 1 \right)
 \leq  8nf\cdot (1+1) = 16nf\,,
 \end{align*}
due to $p \leq 4f/d$.

\paragraph{Case 2, $p \leq \frac{8nf}{S_\text{cache}}$} In this case, $Q_f = 8nf/S_\text{cache}$ forms a feasible solution. And,
\begin{align*}
     &g_\text{comm}\left(1, \frac{8nf}{S_\text{cache}}\right) = 2nd\frac{8nf}{S_\text{cache}} + 8nf \\
    &= 8nf\left(\frac{2nd}{S_\text{cache}} + 1\right) \leq 8nf\cdot (1+1) = 16nf.
\end{align*}
In both cases, the approximation ratio of our solution is ensured to be: $
\frac{g_\text{comm}\left(1, \max\left\{p, \frac{8nf}{S_\text{cache}}\right\}\right)}{\min g_\text{comm}(Q_v, Q_f)} \leq \frac{16nf}{8nf} = 2.$

Note that this holds for $S_\text{cache}\geq 2nd$, which means for a cache size of 256KB, number of (directed) edges in the subgraph ($nd$) can be up to 128K, which is higher than that of the subgraphs in consideration. Also, since $f \gg d$, $p \leq 4f/d$ holds up to large values of $p$.

\end{proof}
%We do not directly solve the above optimization problem. Instead, we propose a simple partitioning scheme which always sets $P=1$, and provide a bound of $g_\text{comm}(1,q)_\text{opt}$ as compared to $g_\text{comm}(p,q)_\text{opt}$. Define the ratio $\alpha=\frac{g_\text{comm}(1,q)_\text{opt}}{g_\text{comm}(p,q)_\text{opt}}$

%Consider two corner cases, when $\gamma_p=1$ and $\gamma_p=\frac{1}{p}$.

%\paragraph*{Case 1: $\gamma_p=1$}
% By constraint 1, $g_\text{comm}(p,q)=2qnd+8pnf\geq 2\frac{C}{p}nd+8pnf\geq 4n\sqrt{Cdf}$. By constraint 2, $g_\text{comm}(p,q)\geq 2\frac{8nf}{S_\text{cache}}\cdot nd+8nf$. 
% On the other hand, for our feature only partitioning scheme, $g_\text{comm}(1,q)_\text{opt}=2\cdot  \max\{C,\frac{8\cdot n\cdot f}{S_\text{cache}}\}\cdot n\cdot d+8\cdot n\cdot f$. When $C<\frac{8\cdot n\cdot f}{S_\text{cache}}$, $\alpha=1$. When $C\geq \frac{8 n f}{S_\text{cache}}$, $\alpha\leq \frac{C d+4 f}{\max\{2\sqrt{Cdf},8nfd/S_\text{cache}+4f\}}$. 

% \paragraph*{Case 2: $\gamma_p=\frac{1}{p}$}

% In this case, $g_\text{comm}(p,q)\geq 2nd+8nf$. And for the feature only partitioning scheme, $g_\text{comm}(1,q)_\text{opt}=2Cnd+8nf$. We get $\alpha=\frac{Cd+4f}{d+4f}$. 

Using typical  values $n\leq 8000$, $f=512$, and $d=15$, then for up to $136$ cores\footnote{Note that $d$ here refers to the average degree of the sampled graph rather than the training graph. Thus, $d$ value here is lower than that in Section \ref{sec: para sample}.}, the total slow-to-fast memory traffic under feature only partitioning is less than 2 times the optimal. 
Recall the two properties (listed at the beginning of this section) that differentiate our case with traditional label propagation. Because the graph size $n$ is small enough, we can find a feasible $Q_f\in\mathbb{Z}^+$ solution to satisfy the cache constraint $\frac{8nf}{Q_f}\leq S_\text{cache}$. Because the value $f$ is large enough, we can find enough number of feature partitions such that $Q_f\geq p$. 
%Figure \ref{fig: partition alpha} shows the change in $\alpha$ with respect to number of processor cores $C$, for $f\in\lbrack 256,1024 \rbrack$ and $d=15$. 
%For state-of-the-art shared memory system, we can guarantee that the total slow-to-fast memory traffic is no more than 2 times the optimal one, for all possible partitioning schemes and types of subgraphs. 
Algorithm \ref{algo: gsaint training feat prop} shows details of our feature propagation.

\begin{algorithm}
\caption{Feature propagation within sampled graph}
\label{algo: gsaint training feat prop}
\begin{algorithmic}[1]
\renewcommand{\algorithmicrequire}{\textbf{Input:}}
\renewcommand{\algorithmicensure}{\textbf{Output:}}
\Require Sampled graph $\G(\V,\E)$; Vertex features $\Set{\bm{h}_v}$; Cache size $S_\text{cache}$; Number of processors $p$
\Ensure Updated features $\Set{\bm{h}_v}$
\State $n\gets \size{\V};\qquad f\gets \size{\bm{h}_v};$
\State $Q_f\gets \max\left\{p,\frac{8nf}{S_\text{cache}}\right\}$
\State Equal partition each $\bm{h}_v$ into $\Set{\bm{h}_v^{(i)}|0\leq i\leq Q_f-1}$
\For{$r=0$ to $Q_f/p-1$}
    \ParFor{$i=0$ to $p-1$}%\color{blue}\Comment{}\color{black}
        \State Propagation of $\Set{\bm{h}_v^{(i+r\cdot p)}|v\in \V}$ into $\V$
    \EndParFor
\EndFor
\State \Return $\Set{\bm{h}_v}$
\end{algorithmic} 
\end{algorithm}

Lastly, the feature only partitioning leads to two more important benefits. Since graph is not partitioned, load-balancing is optimal across processors. Also, the partitioning along features incurs almost zero pre-processing overhead. In summary, the feature propagation in our graph sampling-based GCN achieves 
\begin{enumerate*}
    \item Minimal computation;
    \item Optimal load-balancing;
    \item Zero pre-processing cost;
    \item Low communication volume.
\end{enumerate*}

%%%%%%%%
% yelp %
%%%%%%%%
% subgraph result: mic=0.5585  mac=0.3268  loss=0.0477
% subgraph config:
%	subgraph size: 12000
%	learning rate: 0.05
%	epoch: 150
% model config:
%	dim1: 2048
%	dim2: 2048

\section{Experiments}
\label{sec: exp}

\subsection{Experimental Setup}

We conduct our experiments on 4 large scale graph datasets:
\begin{itemize}
    \item PPI: A protein-protein interaction graph \cite{ppi_reddit}. A vertex represents a protein and edges represent interactions. 
    \item Reddit:  A post-post graph \cite{ppi_reddit}. A vertex represents a post. An edge exists between two posts if the same user has commented on both posts.
    \item Yelp: A social network graph \cite{yelp}. A vertex is a user. An edge represents friendship. Vertex attributes are user comments converted from text using Word2Vec \cite{word2vec}.
    \item Amazon: An item-item graph. A vertex is a product sold by Amazon. An edge is present if two items are bought by the same customer. Vertex attributes are converted from bag-of-words of text item descriptions using singular value decomposition (SVD).
    \end{itemize}
The PPI and Reddit datasets are used in \cite{gcn,graphsage}. Note that the Reddit graph is currently the largest one evaluated by state-of-the-art embedding methods, such as \cite{graphsage,fastgcn,kdd_gcn}. We further prepare two graphs of much larger sizes (Yelp, Amazon), to evaluate scaling more thoroughly. Table \ref{tab:dataset} shows the details.

%PPI is a protein-protein interaction graph provided by GraphSAGE. Reddit is a social network graph from an online forum which is both used in GraphSAGE and FastGCN. These two datasets are the largest two academic datasets used in the state-of-art graph embedding algorithms. 

%As for larger dataset, we preprocess yelp and amazon. Yelp is a user-user social graph generated from the 2018 Yelp data challenge dataset gathered from an online business rating website. Each node in the yelp dataset corresponds to one user while the edges represent two users marked each other as friends. The feature is obtained through [Word2Vec] from the raw comments posted by the user. Each user is classified to multiple types of business that this user has visited. Amazon is an item-item social graph generated from a user-item bipartite graph(provided by HIVE, do i say that here?). The edge in Amazon graph represents that two items have bought by at least a number of common users. The feature is obtained using singular value decomposition (SVD) from the original word-of-bags representation of the comments on the items. And each item is classified to multiple types. 

\begin{table}[!ht]
\caption{Dataset Statistics}
    \centering
    \resizebox{\columnwidth}{!}{
    \begin{tabular}{c|ccccc}
        \toprule
        Dataset & Vertices & Edges & Attribute & Classes & Train/Val/Test\\
        \midrule
        PPI & 14,755 & 225,270 & 50 & 121 (M) & 0.66/0.12/0.22\\
        Reddit & 232,965 & 11,606,919 & 602 & 41 (S) & 0.66/0.10/0.24\\
        Yelp & 716,847 & 6,977,410 & 300 & 100 (M) & 0.75/0.15/0.10\\
        Amazon & 1,598,960 & 132,169,734 & 200 & 107 (M) & 0.80/0.05/0.15\\
        \bottomrule
    \end{tabular}}
    \\
    \vspace{.2cm}
    {\scriptsize{}{*} The (M) mark stands for \textbf{M}ulti-class classification, while (S) stands for \textbf{S}ingle-class.}{\scriptsize \par}
    \label{tab:dataset}
\end{table}

For our graph sampling-based GCN, we provide two implementations, in \verb|Python| (with Tensorflow) and \verb|C++|, respectively \footnote{Code available at \texttt{https://github.com/ZimpleX/gcn-ipdps19}}. The two implementations follow an identical training algorithm (Algorithm \ref{algo: gsaint training sample}). We use the \verb|Python| (Tensorflow) version for single threaded accuracy evaluation in Section \ref{sec: exp acc}, since all baseline implementations are provided in \verb|Python| with Tensorflow. We use the \verb|C++| version to measure scalability of our parallel training (Section \ref{sec: exp scale}). The \verb|C++| implementation is necessary, since \verb|Python| and Tensorflow are not flexible enough for parallel computing experiments (e.g., AVX and thread binding are not explicit in \verb|Python|).

We run experiments on a dual 20-Core Intel\textsuperscript{\small\textregistered} Xeon E5-2698 v4 @2.2GHz machine with 512GB of DDR4 RAM. For the \texttt{Python} implementation, we use \texttt{Python} 3.6.5 with Tensorflow 1.10.0. For the \texttt{C++} implementation, the compilation is via Intel\textsuperscript{\small\textregistered} ICC (\verb|-O3| flag). ICC, MKL and OMP are included in Intel\textsuperscript{\small\textregistered} Parallel Studio Xe 2018 update 3. 

% Datasets: ppi ($10K$); reddit ($200K$); yelp ($800K$); amazon ($1.5M$).

% Platform: 40-core Xeon

% GCN models: GCN; GCN-batched; GraphSAGE; FastGCN; Subgraph-GCN

\subsection{Evaluation on Accuracy and Efficiency}
\label{sec: exp acc}

Our graph sampling-based GCN model significantly reduces training complexity without accuracy loss. To eliminate the impact of different parallelization strategies on training time, here we run our implementation as well as all the baselines using single thread.  
Figure \ref{fig: exp cap 1}
%\footnote{Using the open source code provided by \cite{fastgcn}, we cannot reproduce the performance as reported in the paper \cite{fastgcn}. The dashed line (labeled as ``measured") shows the time-accuracy performance by our own measurement. The solid line (labeled as ``speculated") shows the expected performance according to their paper. Due to the difficulty in running the code, we cannot report performance of \cite{fastgcn} for datasets not included in the original paper. }
shows the accuracy (F1 micro score) vs sequential training time plots. Since all baselines report accuracy using 2 layers in their original papers, all measurements here are based on a 2-layer GCN model. Accuracy is tested on the validation set at the end of each epoch. Between the baselines, \cite{graphsage} achieves the highest accuracy and fastest convergence. Compared with \cite{graphsage}, our GCN model achieves higher accuracy on PPI and Reddit, and the same accuracy on Yelp and Amazon. 
To measure training time speedup, we define an accuracy threshold for each of the datasets. Let $a_0$ be the highest accuracy achieved by the baselines. We define the threshold\footnote{Due to stochasticity in training, we allow $0.25\%$ variance in accuracy.}  as $a_0-0.0025$.
%Training time speedup can vary depending on the accuracy requirement. Some applications require highest accuracy, while others may tolerate small accuracy loss with the benefit of less training time. For a dataset, let $a_0$ be the highest accuracy achieved among all the models. We define three thresholds $a_\text{high}$\footnote{Due to stochasticity in training, we allow $0.5\%$ variance in accuracy. }$=a_0-0.005$, $a_\text{mid}=a_0-0.015$ and $a_\text{low}=a_0-0.025$. 
Serial training time speedup is calculated with respect to training time for the best performing baseline to reach the threshold divided by training time for our model to reach the threshold. 
%To measure training time speedup, we set 3 accuracy thresholds ($a_1, a_2, a_3$, horizontal dashed lines) for each of the datasets. Let $t_i$ be the time taken by a GCN model to reach the accuracy $a_i$. Training time speedup is calculated by $t_i^\text{baseline}/t_i^\text{proposed}$.
%Table \ref{tb: exp eff} summarizes the speedup. 
%Compared with \cite{graphsage}, we achieve $1.9\times$ (PPI), $7.8\times$ (Reddit), $4.7\times$ (Yelp) and $2.1\times$ (Amazon) training time reduction. Compared with \cite{gcn}, we achieve $4.8\times$ speedup for Yelp. We cannot measure speedup for the other three datasets since \cite{gcn} does not reach the threshold. 
We achieve a serial training time speedup of $1.9\times$, $7.8\times$, $4.7\times$ and $2.1\times$ for PPI, Reddit, Yelp and Amazon, respectively.

% 4 plots:
% \begin{itemize}
%     \item Datasets: PPI, Reddit, Yelp, Amazon
%     \item Title: N/A
%     \item $x$-axis: Single Thread Training Time (sec)
%     \item $y$-axis: Test Set F1-Micro Score
%     \item Style: Lines
%     \item Legends: GraphSAGE; FastGCN; Proposed
% \end{itemize}

%We will show that on 4 datasets, subgraph-GCN achieves highest accuracy. 

%\subsection{Comparison on Training Time}

%We will show that on 4 datasets, subgraph-GCN achieves $10\times$ training time reduction.

\subsection{Evaluation on Scalability}
\label{sec: exp scale}

\subsubsection{Scaling of overall training}

\begin{figure*}[htp]
\makebox[0.9\textwidth]{
	\begin{subfigure}[b]{0.248\textwidth}		
    \includegraphics[width=\linewidth]{./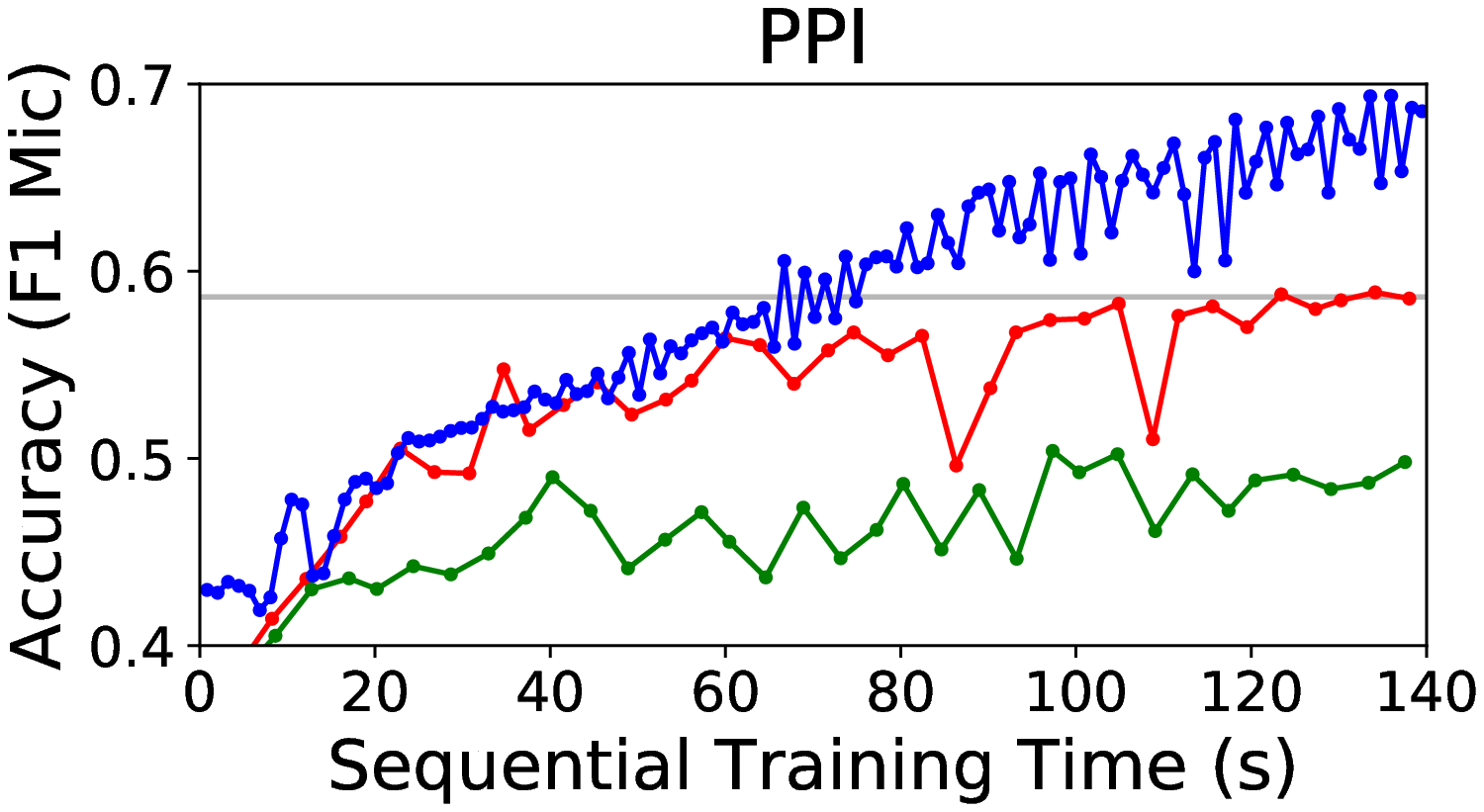}
	\label{fig: exp acc 1 a}
	\end{subfigure}
	\begin{subfigure}[b]{0.235\textwidth}
	\includegraphics[width=1\linewidth]{./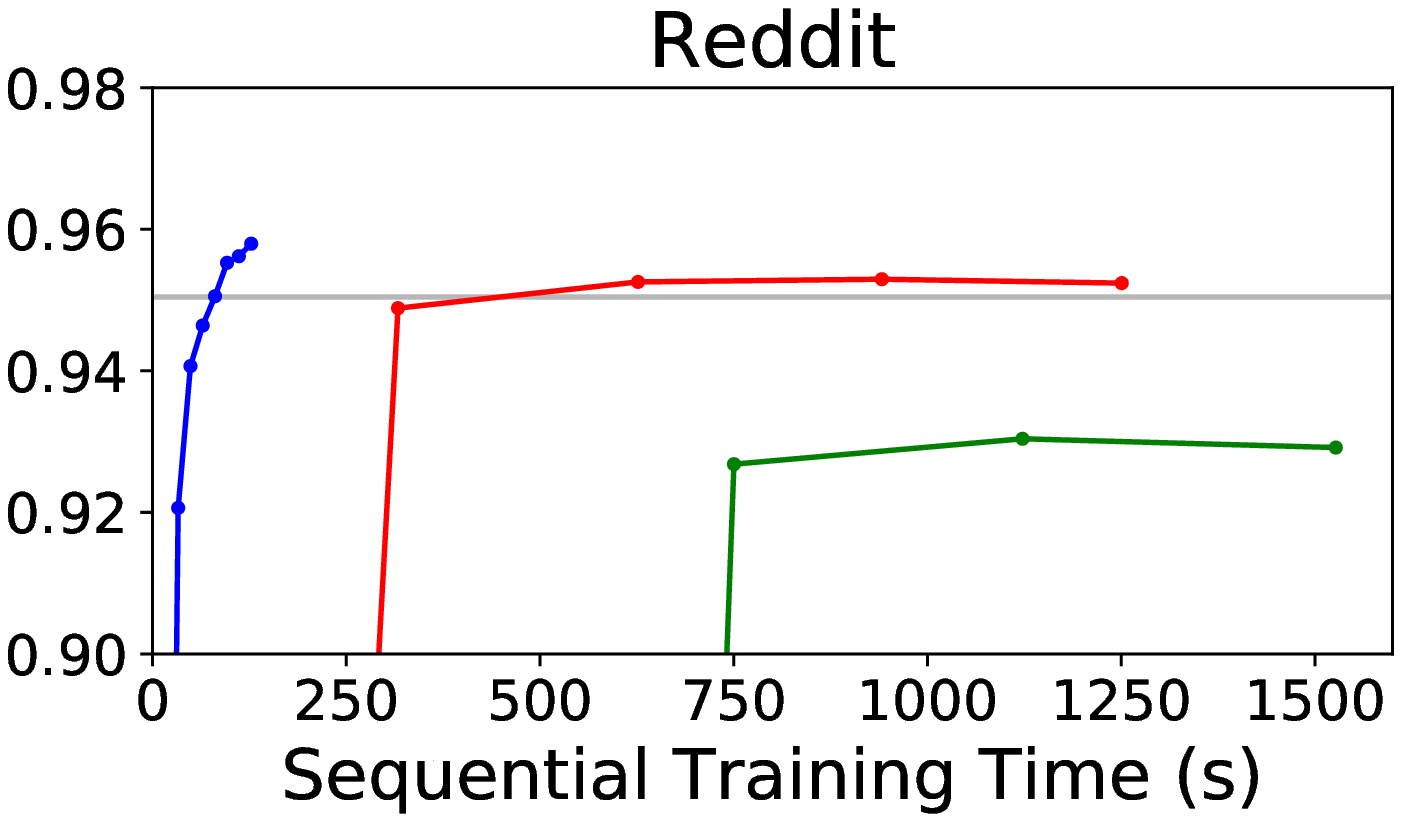}
	\label{fig: exp acc 1 b}
	\end{subfigure}
	\begin{subfigure}[b]{0.235\textwidth}
	\includegraphics[width=1\linewidth]{./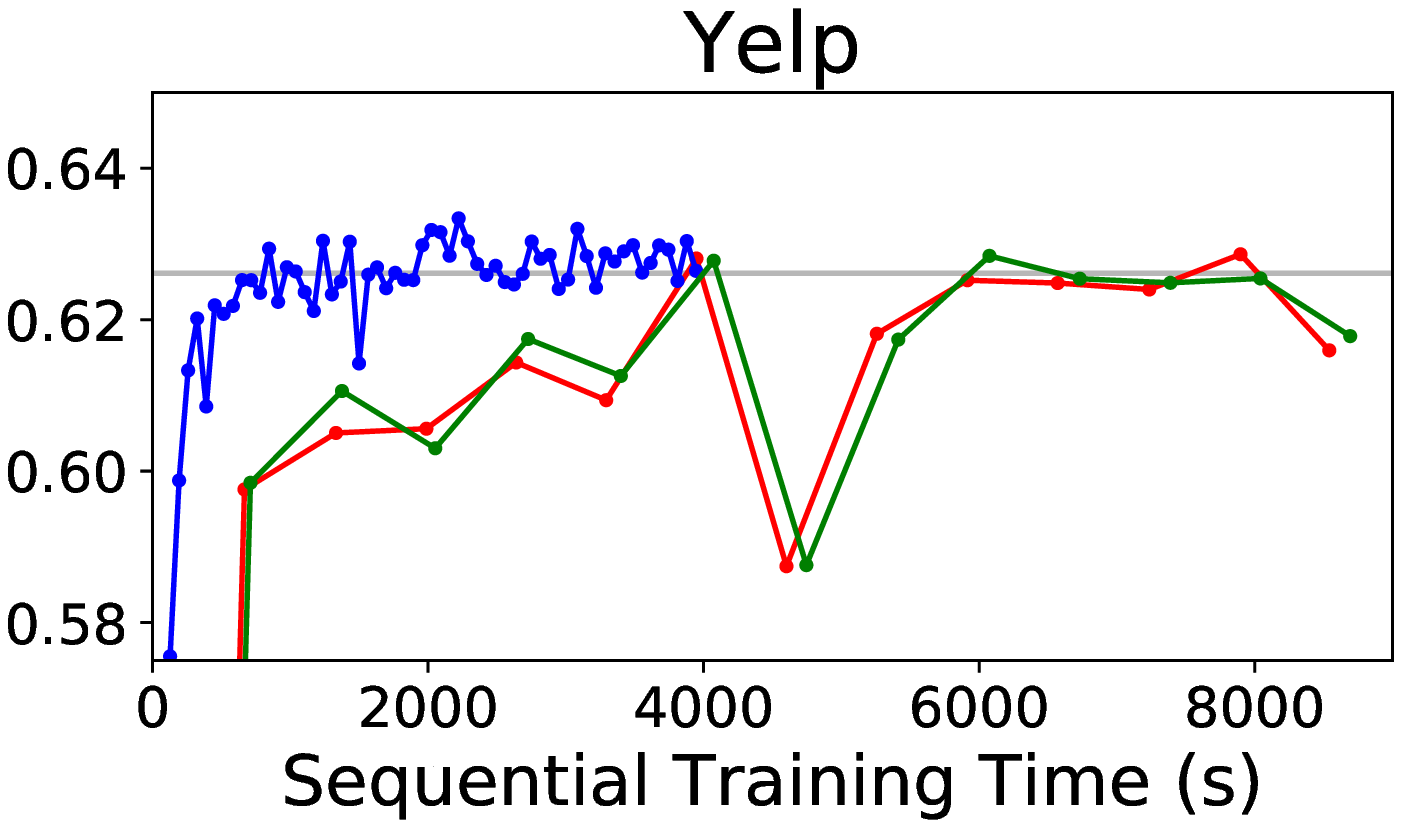}
	\label{fig: exp acc 1 c}
	\end{subfigure}
	\begin{subfigure}[b]{0.235\textwidth}
	\includegraphics[width=1.53\linewidth]{./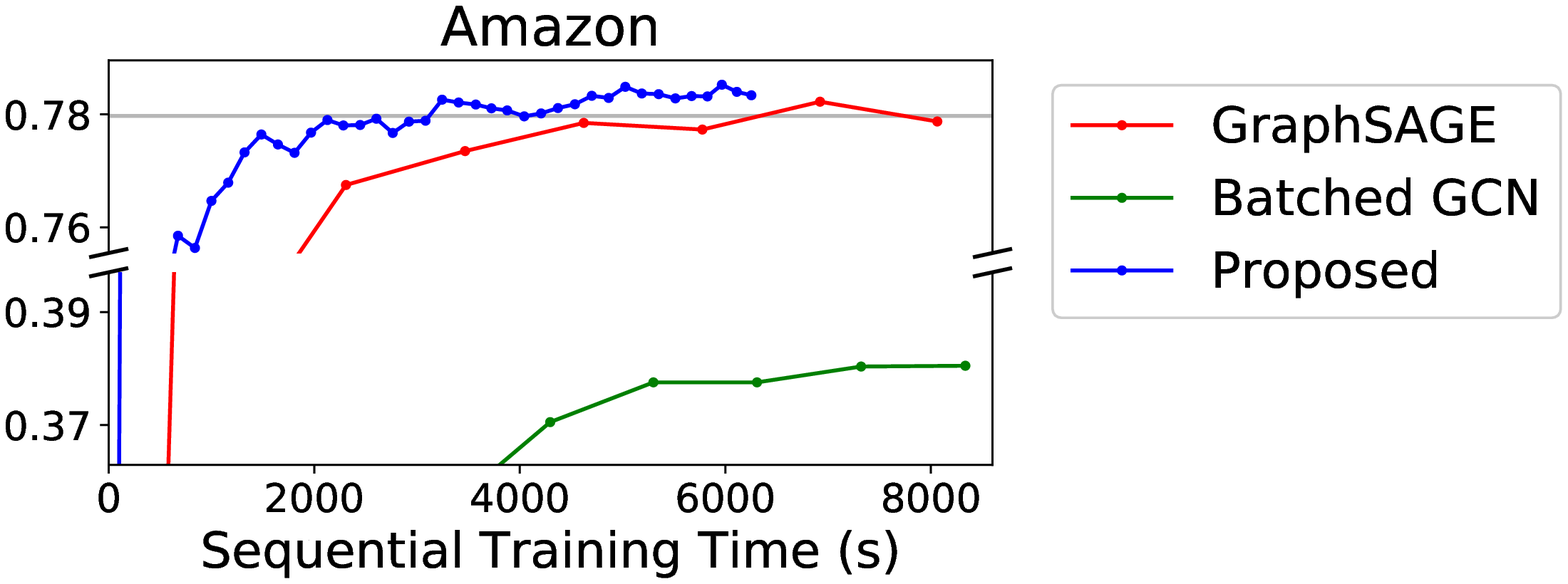}
	\label{fig: exp acc 1 d}
	\end{subfigure}
	}
\vspace{-.4cm}
\caption{Time-Accuracy plot for sequential execution}
\label{fig: exp cap 1}
\vspace{-0.2cm}
\end{figure*}

For our GCN model, Figure \ref{fig: exp scale} shows the parallel training speedup relative to sequential execution. The execution time includes every training steps specified by lines 2 to 8, Algorithm \ref{algo: gsaint training sample} --- sampling (with AVX enabled), feature propagation (forward and backward) and weight application (forward and backward). As before, we evaluate scaling on a 2-layer GCN, with small and large hidden dimensions ($f^{(0)}=f^{(1)}=512$ and $1024$). 
The training is highly scalable, consistently achieving around $20\times$ speedup on 40-cores for all datasets. The performance breakdown in Figure~\ref{fig: exp scale} suggests that
sampling time corresponds to only a small portion of the total time. This is due to
\begin{enumerate*}
    \item low serial complexity of our Dashboard based implementation;
    \item highly scalable design using intra- and inter-subgraph parallelism.
\end{enumerate*}
The main scaling bottleneck is the weight application step using dense matrix multiplication. 
To some extent, the scaling is data dependent. 
High-centrality vertices in the training graph are more likely to be sampled, making caching across training iterations a non-negligible factor. The time on dense matrix multiplication becomes more significant with more processors. %This indicates that the scaling bottleneck is in the \texttt{dgemm} implementation internal to the library.  
%There is slight degradation on scaling when increasing from 20 cores to 40 cores. This is due to 

\begin{figure*}[htp]
\centering
	\begin{subfigure}[b]{1.\textwidth}		
    \includegraphics[width=1\textwidth]{./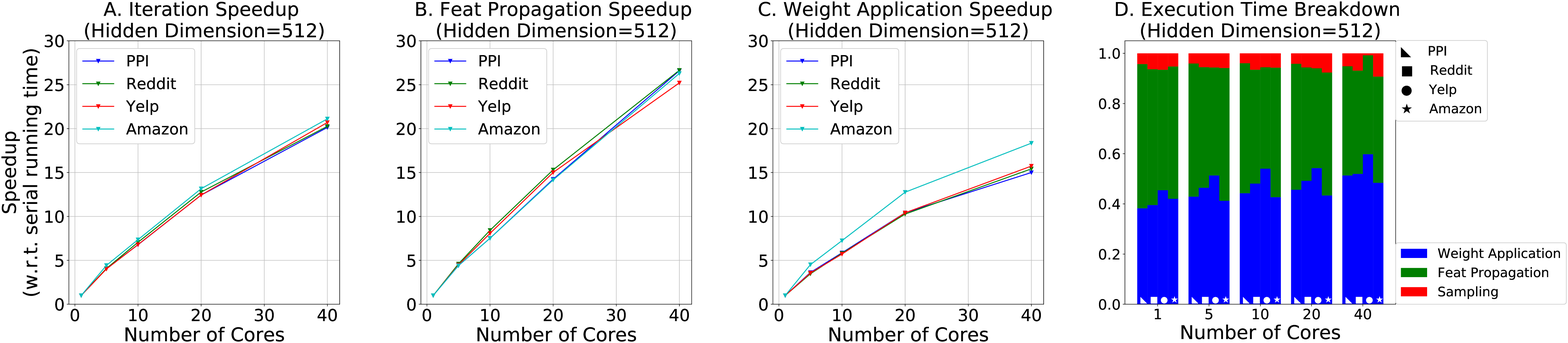}
	\label{fig: exp scale 512}
	\end{subfigure}
	\begin{subfigure}[b]{1.0\textwidth}
	\includegraphics[width=1\linewidth]{./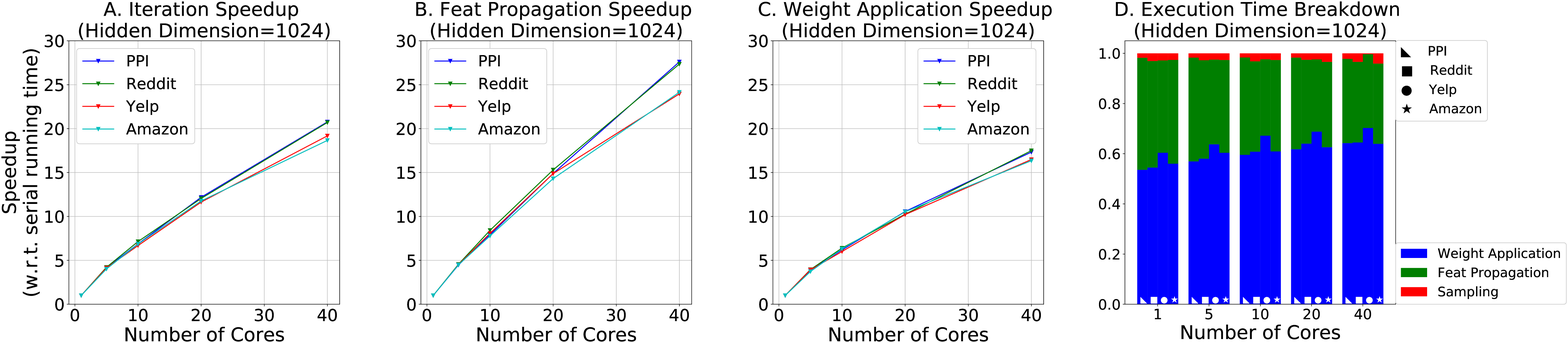}
	\label{fig: exp scale 1024}
	\end{subfigure}
\vspace{-.8cm}
\caption{Scaling evaluation with hidden feature dimensions: $512$ (Upper), $1024$ (Lower)}
\label{fig: exp scale}
\vspace{-0.5cm}
\end{figure*}

%
% 3 plots:
% \begin{itemize}
%     \item Datasets: PPI, Reddit, Yelp, Amazon
%     \item Title: Hidden dimension (128/256/512/1024)
%     \item $x$-axis: Number of Threads
%     \item $y$-axis: Training Time Per Iteration (sec)
%     \item Style: Lines
%     \item Legends: PPI; Reddit; Yelp; Amazon
% \end{itemize}

\subsubsection{Scaling of parallel frontier sampling}

We evaluate the effect of both intra- and inter-subgraph parallelism. The AVX2 instructions supported by our target platform translate to maximum of 8 intra-subgraph parallelism ($p_\text{intra}=8$). The total of 40 Xeon cores makes $1\leq p_\text{inter}\leq 40$. Figure \ref{fg:noavx_vs_avx}A shows the effect of $p_\text{inter}$, when $p_\text{intra}=8$ (i.e., we launch $1\leq p_\text{inter}\leq 40$ independent samplers, where AVX is enabled within each sampler). Sampling is highly scalable with inter-subgraph parallelism. 
We observe that scaling degrades from 20 to 40 cores, due to NUMA effects on cross socket communication --- all the $p_\text{inter}$ samplers on the two sockets keep reading a single copy of the training graph adjacency list, when selecting the next frontier vertices. 
Figure \ref{fg:noavx_vs_avx}B shows the effect of $p_\text{intra}$ under various $p_\text{inter}$. The bars show the speedup of using AVX instructions comparing with otherwise. We achieve around $4\times$ speedup on average. The scaling on $p_\text{intra}$ is data dependent. Depending on the degree distribution of the training graph, there may be significant portion of vertices with less than 8 neighbors. Such load-imbalance explains the discrepancy from the theoretical modeling. 
%For all datasets and $p_\text{inter}$, we achieve around 8 times speedup by intra-subgraph parallelism, consistent with the analysis in Section \ref{sec: para sample inter}.
%Figures \ref{fg:noavx_vs_avx}A, B show the effect of intra- and inter-parallelism respectively, using our Dashboard-based implementation. 
%Figure \ref{fg:noavx_vs_avx}A shows the speedup of per-subgraph sampling time with respect to $p_\text{inter}$ (we enable AVX2 in this case, and thus $p_\text{intra}=8$). Our parallel frontier sampling algorithms scales up to 20.69x (Amazon dataset) with 40 inter threads each assigned to one physical core in the machine. Figure \ref{fg:noavx_vs_avx}B shows the speedup under different number of inter-threads with intra-thread=1(no intra-thread) and intra-thread=8(AVX2). On average, $7.51\times$ speedup is achieved through the benefit of intra-thread parallelism.
As a side-note, for highly skewed graphs such as Amazon, we allocate no more than 30 DB entries to a vertex (regardless of its actual degree), in order to upper bound the probability of popping it from the frontier. This helps accuracy improvement on graphs with skewed degree distributions, since it prevents the situation where all subgraphs contain mostly the same set of vertices.

\begin{figure}
\centering
\includegraphics[width=\textwidth/2]{./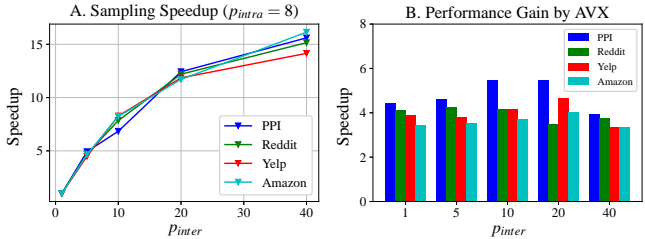}
\caption{Sampling speedup (inter- \& intra-subgraph parallelism)}
\label{fg:noavx_vs_avx}
\end{figure}

% 2 plots:
% \begin{itemize}
%     \item Datasets: PPI, Reddit, Yelp, Amazon
%     \item Title: N/A
%     \item $x$-axis: Number of Threads
%     \item $y$-axis: Sampling Time Per Subgraph (sec)
%     \item Style: Lines
%     \item Legends: $\eta=2,4,8, p_{\text{intra}}=1,2,4,8$
% \end{itemize}

\subsubsection{Scaling of feature propagation}

Figure \ref{fig: exp scale} shows the scaling of feature propagation using the partitioning scheme presented in Section \ref{sec: para feat prop}. We achieve high scaling (around $25\times$ speedup on 40 cores) for all datasets for various feature sizes, due to our caching strategy and optimal load-balancing. 

%In addition, the average measured performance when using 40 cores is $d\text{GFLOPS}$, which is close to the system peak performance $e\text{GFLOPS}$. This verifies our performance bound analysis in Section \ref{sec: para feat prop}. 

%2 plots:
%\begin{itemize}
%    \item Datasets: PPI, Reddit, Yelp, Amazon
%    \item Title: N/A
%    \item $x$-axis: Number of Threads
%    \item $y$-axis: Feature Propagation Time (sec)
%    \item Style: Lines; bars
%    \item Legends: With/Without intelligent partitioning; $p_{\text{feat}}=1,2,4,8$
%\end{itemize}

\subsubsection{Scaling of weight application}

As discussed in Section \ref{sec: para feat prop kernel}, the weight application is implemented by \texttt{cblas\_dgemm} routine of Intel\textsuperscript{\small\textregistered} MKL \cite{MKL}. All optimizations are internal to the library. Figure \ref{fig: exp scale} shows the scaling result. 
On 40 cores, average of $16\times$ speedup is achieved. We speculate that the overhead of MKL's internal thread and buffer management is the bottleneck on further scaling. 

% \subsubsection{Performance breakdown}

% Figure \ref{fig: } shows the performance breakdown under various number of cores. Firstly, sampling time corresponds to only a small portion of the total time. This is due to
% \begin{enumerate*}
%     \item low serial complexity of our Dashboard based implementation;
%     \item highly scalable design using intra- and inter-subgraph parallelism.
% \end{enumerate*}
% Secondly, time on dense matrix multiplication becomes more significant with more processors. This indicates that the scaling bottleneck is in the \texttt{dgemm} implementation internal to the library.  

\subsection{Deeper Learning}

Though state-of-the-art methods do not provide accuracy results on deeper GCN models, adding more layers in a neural network is proven to be very effective in increasing the expressive power (and thus accuracy) of the network \cite{nn_depth}. 
Here we evaluate the speedup of our GCN implementation compared with \cite{graphsage}, under various number of layers and processors. 
The speedup is calculated by comparing our \texttt{C++} implementation with a \texttt{Python} one, whose parallelism is internally highly optimized via Tensorflow.  
Use Reddit dataset as an example, Table \ref{tab:layer} shows the speedup. Our implementation achieves significantly higher speedup in deeper GCNs ($335\times$ for a 3-layer model, under single thread comparison) as well as with more processors ($37.4\times$  for a standard 2-layer model, using 40 cores). This demonstrates the better scalability of our implementation, and verifies the conclusion in Section \ref{sec: gcn efficiency}. 
Note that due to ``neighbor explosion", for every unit of computation, \cite{graphsage} requires approximately $d_\text{LS}$ times more communication compared with our model. This explains the scalability results with respect to number of processors. In summary,
despite the programming language overheads, the better performance of our design is mainly due to the efficient GCN algorithm and parallelization strategies. Accuracy evaluation for deeper GCN models is out of scope of this paper. 

\begin{table}[!ht]
\caption{Speedup Comparison with Parallelized \cite{graphsage} (Reddit)}
    \centering
    \begin{tabular}{c|ccccc}
        \toprule
          & 1-core & 5-core & 10-core & 20-core & 40-core\\
        \midrule
        1-layer & $2.03\times$ & $4.77\times$ & $9.34\times$ & $17.25\times$ & $23.93\times$ \\
        2-layer & $7.74\times$ & $12.95\times$ & $18.50\times$ & $28.43\times$ & $37.44\times$ \\
        3-layer & $335.36\times$ & $568.93\times$ & $828.25\times$ & $1164.45\times$ & $1306.21\times$\\
        \bottomrule
    \end{tabular}
    \\
    \vspace{.2cm}
    %{\scriptsize{}{*} The (M) mark stands for \textbf{M}ulti-class classification, while (S) stands for \textbf{S}ingle-class.}{\scriptsize \par}
    \label{tab:layer}
\end{table}

%\begin{figure}
%\centering
%\includegraphics[width=\textwidth/5]{exp_fig/heatmap.png}
%\caption{Speedup compared with \cite{graphsage} using various number of layers and processors}
%\label{fig: exp layer}
%\end{figure}
\section{Conclusion and Future Work}

We presented an accurate, efficient and scalable graph embedding method. Considering the redundant computation in current GCN training, we proposed a graph sampling-based method which ensures accuracy and efficiency by constructing a new GCN on a small sampled subgraph in every iteration. 
We further proposed parallelization techniques to scale the graph sampling and overall training to large number of processors. 

We will extend our graph sampling-based GCN by evaluating impact on accuracy using various sampling algorithms. We will extend the parallel sampler implementation to support a wider class of sampling algorithms. We will also work on the theoretical foundation of the graph sampling-based GCN.%, and evaluate the accuracy gain by using deeper GCN models. 
\section*{Acknowledgement}

This material is based on work supported
by the Defense Advanced Research Projects Agency (DARPA) under Contract Number FA8750-17-C-0086, National Science Foundation (NSF) under Contract Numbers CNS-1643351 and ACI-1339756
and Air Force Research Laboratory under Grant Number FA8750-15-1-0185. Any opinions, findings and conclusions or recommendations
expressed in this material are those of the authors and do not necessarily reflect the views of DARPA, NSF or AFRL. %The U.S. Government is authorized to reproduce and distribute reprints for Government purposes notwithstanding any copyright notation here on.

\bibliographystyle{IEEEtran}
%\bibliography{IEEEabrv,cite}

\end{document}